\let\NAT@parse\undefined
\newtheorem{proposition}{Proposition}
\newtheorem{corollary}{Corollary}[proposition]
\newcommand*{\Object}{O}
\newcommand*{\Center}{c}
\newcommand*{\WorkSpace}{\mathcal{W}}
\newcommand*{\FreeSpace}{\mathcal{W}_\text{free}}
\newcommand*{\ObstacleSet}{\mathcal{O}_\text{obs}}
\newcommand*{\VisGraph}{G_v}
\newcommand*{\CoarseGraph}{G_c}
\newcommand*{\DenseGraph}{G_d}
\newcommand*{\VisVert}{\mathcal{V}_v}
\newcommand*{\VisEdge}{\mathcal{E}_v}
\newcommand*{\CoarseVert}{\mathcal{V}_c}
\newcommand*{\CoarseEdge}{\mathcal{E}_c}
\newcommand*{\DenseVert}{\mathcal{V}_d}
\newcommand*{\DenseEdge}{\mathcal{E}_d}
\newcommand*{\IRISSet}{\mathcal{P}}
\newcommand*{\CVertU}{P_i}
\newcommand*{\CVertW}{P_j}
\newcommand*{\CIntersection}{P_{ij}}
\newcommand*{\SurfaceInter}{\partial P_{ij}}
\newcommand*{\DVertU}{v_{ij,n_1}}
\newcommand*{\DVertW}{v_{ik,n_2}}
\title{\LARGE \bf
Rigid Body Path Planning using Mixed-Integer Linear Programming
}
\author{Mingxin Yu$^{1}$ and Chuchu Fan$^{1}$% <-this % stops a space
% \thanks{*This work was not supported by any organization}% <-this % stops a space
\thanks{$^{1}$ The authors are with the Department of Aeronautics and Astronautics, Massachusetts Institute of Technology, Cambridge, MA 02139, USA. {\tt\small \{yumx35, chuchu\}@mit.edu}}%
}
\begin{document}

% \begin{onecolumn}
%     \setcounter{section}{0}
%     \setcounter{figure}{6}
%     \input{RAL-revision/B-response}
% \end{onecolumn}

% \twocolumn\newpage
\maketitle
% \thispagestyle{empty}
% \pagestyle{empty}

% \setcounter{section}{0}
% \setcounter{figure}{0}

%%%%%%%%%%%%%%%%%%%%%%%%%%%%%%%%%%%%%%%%%%%%%%%%%%%%%%%%%%%%%%%%%%%%%%%%%%%%%%%%
\begin{abstract}
Navigating rigid body objects through crowded environments can be challenging, especially when narrow passages are presented. Existing sampling-based planners and optimization-based methods like mixed integer linear programming (MILP) formulations, suffer from limited scalability with respect to either the size of the workspace or the number of obstacles.
In order to address the scalability issue, we propose a three-stage algorithm that first generates a graph of convex polytopes in the workspace free of collision, then poses a large set of small MILPs to generate viable paths between polytopes, and finally queries a pair of start and end configurations for a feasible path online.
The graph of convex polytopes serves as a decomposition of the free workspace and the number of decision variables in each MILP is limited by restricting the subproblem within two or three free polytopes rather than the entire free region. 
Our simulation results demonstrate shorter online computation time compared to baseline methods and scales better with the size of the environment and tunnel width than sampling-based planners in both 2D and 3D environments.

\end{abstract}

%%%%%%%%%%%%%%%%%%%%%%%%%%%%%%%%%%%%%%%%%%%%%%%%%%%%%%%%%%%%%%%%%%%%%%%%%%%%%%%%
\section{Introduction}
The \textit{Piano Mover's Problem}~\cite{schwartz1983piano} asks whether one can find a sequence of rigid body motions from a given initial position to a desired final position, subject to certain geometric constraints during the motion. 
The difficulty in planning for point objects comes from the dimension of the objects, which prevents the feasibility of naively tracking the motion of a point, especially when narrow corridors are presented in the environment and an associated higher-dimension configuration space of $SE(2)$ or $SE(3)$ rather than $\mathbb R^2$ or $\mathbb R^3$. 
This problem has inspired many motion planning works~\cite{kavraki1996prm, tomlin2008tunnel_milp, dobson2014sparse, nister2023nonholonomic}, mainly streamed as sampling-based and optimization-based methods. 

\begin{figure}
    \centering
    \includegraphics[width=.7\linewidth]{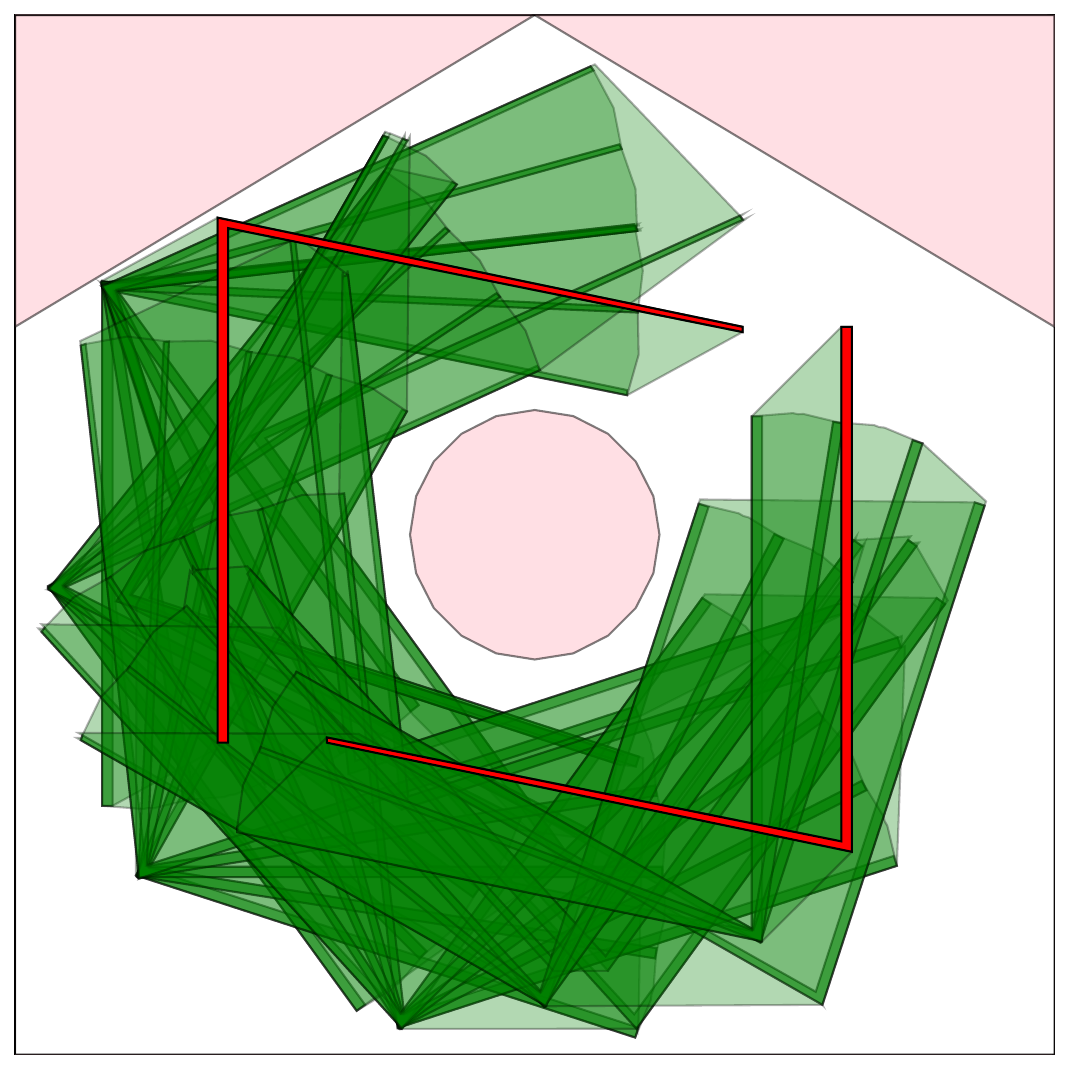}
    \caption{The figure demonstrates an example solved with our MILP-based method, navigating a non-convex, V-shaped object through a complex environment from the start configuration to the goal configuration, both marked red. In the figure, the obstacles are shown in pink, while the green depicts the regions swept by the object, demonstrating the effectiveness and precision of our approach in challenging environments.}
    \label{fig:enter-label}
\end{figure}

% sampling-based method
Sampling-based methods like probabilistic roadmap (PRM)~\cite{kavraki1996prm} are widely adopted because of their simplicity. The approaches employ discrete collision detection, which is straightforward and efficient but highly dependent on the chosen resolution. Setting improper parameters can lead to missed obstacles, resulting in invalid paths.
% optimization (MILP & gcs)
On the other side, solving an obstacle avoidance problem can be formulated as optimization problems for optimality and safety guarantee, such as trajectory optimization using Mixed-Integer Linear Programming (MILP)~\cite{tomlin2008tunnel_milp,8613017} or Graphs of Convex Sets (GCS)~\cite{amice2022ciris}. They offer a more robust solution by directly incorporating collision avoidance into the formulations. These methods, unlike sampling-based methods, ensure paths being collision-free by design, thus bypassing the dependency of resolution in collision detection.

% why our optimization formulation
Despite their advantages, current optimization formulations come with their own set of challenges. 
Solving MILPs, for example, is an NP-hard problem and can be computationally intensive as the environment complexity increases. The time to directly solve a MILP for a feasible path grows exponentially with the number of waypoints, which is proportional to the required maneuvers in an environment. This scalability issue makes brute-force MILP-based methods difficult to apply.
On the other hand, the effectiveness of GCS depends on precise initial seeding to capture narrow passages within the configuration space, which brings feasibility challenges to GCS methods. 
To address these inefficiencies, our approach simplifies the problem by decomposing a single large MILP into manageable, fixed-size smaller MILPs. Additionally, by focusing directly on the workspace rather than the configuration space, our method mitigates the challenges associated with identifying critical narrow passages, thereby enhancing efficiency and scalability in path planning.

In this work, our pipeline is structured into three key stages: 
\textit{Workspace decomposition} - we first construct a graph of convex set $\CoarseGraph$ covering the free workspace. This stage is performed offline and the graph is reusable for multiple queries across different objects. We exploit the lower dimensionality of the workspace compared to the configuration space to obtain a higher coverage ratio and fewer narrow tunnels, which are typically challenging to identify and cover.
\textit{Path segment validation} - we then construct a graph $\DenseGraph$ in configuration space, which is also offline. Here, the nodes represent bottleneck configurations identified by $\CoarseGraph$ and the edges are viable paths validated through MILPs. This graph allows multiple queries for different start and goal configurations. 
At the \textit{online} stage, we connect the start and goal configuration into $\DenseGraph$ and retrieve the solution path. Our experiment results show that this method significantly reduces online time consumption across various objects and environments, in both 2D and 3D settings. 

\begin{figure*}[thb]
    \centering
    \includegraphics[width=0.91\textwidth]{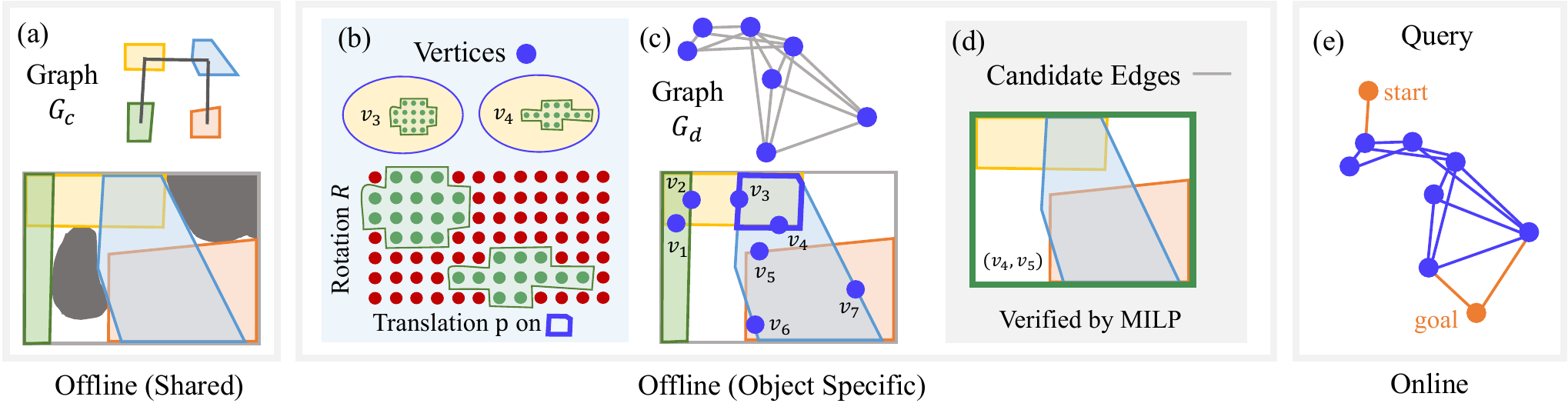}
    \caption{An overview of our pipeline. (a) \textit{Workspace decomposition:} Decomposition of the free workspace into a graph of convex polytopes $\CoarseGraph$. The polytopes serve as graph vertices and are interconnected if they overlap  (\cref{sec:coarse_graph}). 
    (b-d) \textit{Path segment validation:} Construction of $\DenseGraph$(\cref{sec:dense_graph}), where each vertex is a set of interconnected free configurations - illustrated as the set of green points enclosed by the green ring. The motions between vertices (blue edges) are generated by solving MILPs (\cref{sec:dense-node-full}).
    (e) \textit{Online query}: the start and end configurations are connected to graph $\DenseGraph$ and the planned path is retrieved (\cref{sec:inference}).}
    \label{fig:pipeline}
\end{figure*}

This work presents several key contributions: 
(1) By operating directly within the workspace, our method effectively mitigates the challenges associated with covering narrow tunnels in the free space, compared to configuration space-based approaches.
(2) We simplify the path planning process by breaking down a large, complex optimization problem into a series of smaller MILPs. Each smaller problem focuses on finding a valid path segment within a region, significantly enhancing the scalability of our approach.
(3) We conduct comprehensive experiments in both 2D and 3D environments, comparing against GCS and sampling-based methods. Our results demonstrate scalability with the environment, higher feasibility, and shorter computation time.

% \vspace{-6pt}
\section{Related Works}\label{sec:related_work}
\noindent\textbf{Graph-based methods.} 
Graph-based methods are particularly suitable for scenarios requiring multiple queries within the same environment. Once the graph is constructed, it can be reused to find multiple paths.
One category builds the graph of discrete configurations, like sampling-based methods PRM~\cite{kavraki1996prm} and lattice planners~\cite{nister2023nonholonomic}. 
PRM randomly samples states in C-space as nodes in the graph. Nodes are connected based on the feasibility of direct paths. Some variants of PRM also consider utilizing workspace information to guide sampling in configuration space~\cite{kurniawati2008workspace, 5477241}, therefore accelerating the exploration.
Lattice planners are commonly used in nonholonomic vehicle parking scenarios and leverage high levels of parallelization~\cite{mcnaughton2011parallel,nister2023nonholonomic}. They use a regular grid in the configuration space to define graph nodes~\cite{kelly2006toward}. And the edges are computed offline~\cite{pivtoraiko2009differentially}, representing motions that adhere to specific constraints. However, one inherent limitation of lattice planners is their scaling with grid resolution. Employing regular lattices across the entire configuration space can be computationally expensive and inefficient. In contrast, our method reduces the size of the graph by only utilizing a regular grid on a set of manifolds within the configuration space, where the metric is strictly zero.

Another category of graph-based planners relies on space decomposition, where the graph explicitly represents the free space.
Based on constrained Delaunay triangulation~\cite{chew1987constraineddelaunay}, previous works perform well in 2D environments for point objects~\cite{demyen2006efficientTriangulation,tomlin2008tunnel_milp} either through search or by solving MILPs. Some works have addressed the problem for higher DoF robots by incorporating potential field after decomposition for collision avoidance~\cite{932817} in less-cluttered 3D environments. While these approaches provide a foundation for solving the problem, they may struggle with highly constrained environments or complex object geometries.
GCS line of works~\cite{deits2015iris,amice2022ciris,marcucci2024spp-gcs} adopts a different approach by attempting to cover the free space with a set of convex polytopes, thereby retrieving paths through optimization. This method is capable of working in both workspace and configuration space, thus accommodating any shape and scaling to more than 10 dimensions.

\noindent\textbf{Exact collision detection}
Most path-planning algorithms work with discrete collision detection, due to their easy implementation, broad applicability, and fast execution~\cite{kavraki1996prm,lavalle2001randomized}. This method checks a path in C-Space by creating samples along the path and when all these samples are checked to be collision-free, the entire path is assumed to be collision-free. While efficient, its accuracy depends on the sampling resolution, risking missed collisions with coarse samples or delays with overly fine-grained samples. 
To enhance reliability, one method is free bubble~\cite{quinlan1995real}, recursively bisecting the samples on the path to guarantee collision-free~\cite{schwarzer2005adaptive} by calculating the maximum allowable movement without collision. 
Another method is continuous collision checking~\cite{845313,zhang2007continuous}, which performs well for rigid body motions by directly checking the collisions of the reachable set with the obstacles. But they suffer from the speed. 
We integrate both methods in the algorithm, aiming for both soundness and efficiency.

% \vspace{-6pt}
\section{Pipeline Overview}\label{sec:pipeline}
In this paper, we consider the path planning problem of a rigid body object $\Object$. The workspace $\WorkSpace$, the physical space where the object lies, can be $\mathbb R^2$ or $\mathbb R^3$ depending on the particular application at hand. And the configuration space for $\Object$ is $SE(2)$ and $SE(3)$, respectively. A configuration is represented as $q=(p,R)$ with a position vector $p\in\mathbb R^2$ or $\mathbb R^3$ and a rotation matrix $R\in SO(2)$ or $SO(3)$.
The pipeline of our method is shown in \cref{fig:pipeline}. 
We outline major stages of our method in this section, before going deep into verifying path segments with MILP in \cref{sec:non-point}.

\subsection{Workspace decomposition - Construct Coarse Graph $\CoarseGraph$}\label{sec:coarse_graph}
\setlength{\textfloatsep}{2pt}{
\begin{algorithm}[!b]
    \caption{Construct a graph of convex polytopes $\CoarseGraph$}\label{alg:iris}
    \begin{algorithmic}[1]
        \REQUIRE list of obstacles $\ObstacleSet$, number of samples for visibility graph $n_v$, number of samples per iteration $n_s$, coverage threshold $\alpha$.
        \STATE $\IRISSet \gets \emptyset$
        \STATE $\VisGraph \gets \textsc{SampleVisibilityGraph}(\ObstacleSet, n_v)$
        \WHILE{$\textsc{CheckCoverage}(\VisEdge, \IRISSet) < \alpha$}
            \STATE $\mathcal S \gets \textsc{SampleVisibilityEdge}(\VisEdge, \IRISSet, n_s)$
            \STATE $\IRISSet \gets \IRISSet\,\cup \textsc{Iris}(\mathcal S)$
        \ENDWHILE
        \STATE $\CoarseGraph \gets \textsc{ConstructGraph}(\IRISSet)$
        \RETURN $\CoarseGraph$
    \end{algorithmic}
\end{algorithm}}
At this stage, we aim to approximately decompose the free workspace $\FreeSpace$ into a set of convex polytopes $\IRISSet$ and build a graph $\CoarseGraph$ on $\IRISSet$ as shown in~\cref{alg:iris}. The graph $\CoarseGraph$ is shared across various objects within the same environment. 

Following~\cite{werner2023irisclique}, we first construct a visibility graph $\VisGraph\coloneq(\VisVert, \VisEdge)$ where $\VisVert$ is uniformly sampled from $\WorkSpace$ with points inside obstacles excluded, and $\VisEdge$ is added by checking for collisions along the line segments connecting each pair of points within some distance using~\cref{thm:line}. 
During each iteration, the subroutine \textsc{SampleVisibilityEdge} samples $n_s$ points on the edges in $\VisEdge$ that are not yet covered by the polytopes in $\IRISSet$. The $n_s$ points are then used as initial points for IRIS~\cite{deits2015iris}, after which the newly computed polytopes are added into $\IRISSet$. 
The iteration terminates when the coverage ratio of $\IRISSet$ over $\VisEdge$ exceeds threshold $\alpha > 0$. 
Subsequently, an undirected graph $\CoarseGraph\coloneq(\CoarseVert, \CoarseEdge)$ is constructed with vertices $\CoarseVert=\IRISSet$, and with an edge $(P_{i_1},P_{i_2})\in\CoarseEdge$ for every pair of intersected polytopes $P_{i_1}$ and $P_{i_2}$.

\subsection{Path Segment Validation - Construct Dense Graph $\DenseGraph$}\label{sec:dense_graph}
Graph $\CoarseGraph$ is a GCS, and an optimal solution of the path planning problem for a point object can be found via~\cite{marcucci2024spp-gcs}. However, its direct applicability is limited when considering non-point objects. 
The limitation arises because an intersection between polytopes - a valid pathway for point objects - does not inherently guarantee feasible traversal for objects with non-negligible dimensions and orientations.

To address this challenge, we introduce another undirected graph $\DenseGraph\coloneq(\DenseVert, \DenseEdge)$ that describes the connectivity between adjacent polytopes for a specific object $\Object$, detailed in~\cref{alg:dense-graph}.
Our key insight is that, for an object $\Object$ to move from $\CVertU$ to $\CVertW$, its center $\Center$ must traverse the boundary $\SurfaceInter$ of the intersection $\CIntersection\coloneq \CVertU\cap \CVertW$. 

\noindent\textbf{Vertices $\DenseVert$.} \label{sec:dense-node-full}
The vertices $\DenseVert$ in $\DenseGraph$ are generated through subroutine \textsc{Sample\&Group}. The process begins with discretizing $\SurfaceInter$ into regular grids. 

\textit{a) Discretization in 2D environments.}
For 2D, the boundary $\SurfaceInter$ is a closed ring, which can be parameterized linearly from $\lambda=0$ to $\lambda=1$, with the points for $\lambda=0$ and $\lambda=1$ being identical. Translations along this boundary are selected with a fixed interval $\delta\lambda> 0$. 
Rotations are discretized into $n_R > 0$ possible values, with angles $(\theta_1, \cdots, \theta_{n_R})$ being $(1\cdot2\pi/n_R, 2\cdot2\pi/n_R,\cdots, n_R\cdot2\pi/n_R)$. These correspond to rotation matrices $(R_1, \cdots, R_{n_R})$. Together, the translation parameter $\lambda$ and the set of rotations ${\theta}$ create a 2D grid of possible configurations, as shown in~\cref{fig:pipeline}.

\textit{b) Discretization in 3D environments.}
In 3D scenarios, the boundary $\SurfaceInter$ consists of several facets, each treated as a disjoint boundary without considering the connections between them.
On each facet, the translations are constructed using a regular rectangle grid, and the rotations are a selected set of rotation matrices $(R_1, \cdots, R_{n_R})$. Together, the translations and rotations create a 3D grid on a facet. 

After discretization, we collect the set of free configurations $\mathcal C_{ij}$ on all grids, where a free configuration refers to the object being completely contained inside $P_i\cup P_j$. 
We then try to group free configurations on the same grid, forming of several disjoint subgraphs $\{G_{p,ij,n}\}_{n=1}^{n_{ij}}$, as shown in~\cref{fig:pipeline}. Within each subgraph, any two configurations can be interconnected via a collision-free path. 
This process is detailed in the supplementary materials\footnote{See: \href{https://sites.google.com/view/realm-rigidmilp}{https://sites.google.com/view/realm-rigidmilp}}.
Subsequently, the set of vertices $\mathcal V_{p,ij,n}\subset \mathcal C_{ij}$ of $n$-th subgraph $G_{p,ij,n}$ on $\SurfaceInter$ becomes a node $v_{ij, n}=\mathcal V_{p,ij,n}\in\DenseVert$ to serve as potential waypoints in detailed path planning. 
\setlength{\textfloatsep}{2pt}{
\begin{algorithm}[b]
    \caption{Construct a graph of convex polytopes $\DenseGraph$}\label{alg:dense-graph}
    \begin{algorithmic}[1]
        \REQUIRE list of obstacles $\ObstacleSet$, graph $\CoarseGraph$, number of intermediate waypoints $N$.
        \STATE $\DenseVert\gets\emptyset$, $\DenseEdge\gets\emptyset$
        \FOR{$(P_i, P_j)\in \CoarseEdge$}
            \STATE $v_{ij, 1},\cdots,v_{ij, n_{ij}}\gets\textsc{Sample\&Group}(\partial P_{ij}, \ObstacleSet)$
            \STATE $\DenseVert\gets\DenseVert\,\cup \{v_{ij, 1},\cdots,v_{ij, n_{ij}}\}$
        \ENDFOR
        \STATE $\text{CandidateEdgeSet}=\{(v_{ij,n_1},v_{ik,n_2})|v_{ij, n_1}, v_{ik,n_2}\in\DenseVert, v_{ij, n_1}\neq v_{ik,n_2}\}$
        \FOR{$(v_{ij,n_1},v_{ik,n_2})\in\text{CandidateEdgeSet}$}
            \IF{$\textsc{VerifyTraversal}(v_{ij, n_1},v_{ik, n_2}, \CoarseGraph, N$)}
                \STATE $\DenseEdge \gets \DenseEdge\,\cup \{(v_{ij, n_1},v_{ik, n_2})\}$
            \ENDIF
        \ENDFOR
        \RETURN $\DenseVert,\DenseEdge$
    \end{algorithmic}
\end{algorithm}}

\noindent\textbf{Edges $\DenseEdge$.} 
We assess all possible connections among $\DenseVert$ to establish $\DenseEdge$. For any two vertices $\DVertU$ and $\DVertW$, an edge is considered if the proposed traversal between vertices is verified as feasible by $\textsc{VerifyTraversal}$. 
The edge $(\DVertU,\DVertW)$ represents the feasibility of moving from polytope $P_j$ to $P_k$ via $P_i$.

\subsection{Online Query}\label{sec:inference}
Graph $\DenseGraph$ serves as an offline pre-computed roadmap, enabling rapid online path planning from a start configuration $q_{\text{start}}$ to an end configuration $q_{\text{end}}$.
We introduce new vertices $v_{\text{start}}=\{q_{\text{start}}\}$ and $v_{\text{end}}=\{q_{\text{end}}\}$ into $\DenseGraph$, connecting them using the same \textsc{VerifyTraversal} subroutine.
The resulting path consists of alternating segments: inter-vertex motions and intra-vertex motions. The path query first tries to extract a sequence of vertices from $v_{\text{start}}$ to $v_{\text{end}}$ on $\DenseGraph$. Inter-vertex motions are retrieved from the connecting edges. For each vertex $v_{ij,n}$ to be visited, the intra-vertex motions are obtained from the corresponding subgraph $G_{p,ij,n}$, to connect entry and exit configurations. All motion retrieval is performed online using Dijkstra's algorithm.

\vspace{-6pt}
\section{Verifying Path Segments}\label{sec:non-point}
A significant challenge arises in verifying the motion between waypoints due to the complex geometry of the objects. Previously, this is commonly addressed by bloating the obstacles to account for the size of objects, thereby transforming the problem into planning within this bloated environment as with a point object~\cite{demyen2006efficientTriangulation}.
However, this method can be overly restrictive, especially for objects whose shapes deviate significantly from circular or spherical forms. 

In this section, we will tackle the challenge using MILP to verify path segments for polytope objects.
We assume the rigid-body object $\Object$ to be simply connected (no holes) and its geometric center to lie inside the $\Object$.
We denote the vertices of $\Object$ are $\{v_{O,i}\}$.

%%%%%%%%%%%%%%%%%%%%%%%%%%%%%%%%%%%%%%%%%%%%%%%%%%%%%%%%%%%%%%%%%%%%%%%%%%%
\subsection{MILP Formulation}
As discussed in~\cref{sec:dense_graph}, the traversal between vertices $(\DVertU, \DVertW)$ in $\DenseGraph$ is not straightforward.
Therefore, we formulate the subroutine \textsc{VerifyTraversal} as an MILP, which is to verify whether there exists a piece-wise linear path from a configuration in $\DVertU$ to another one in $\DVertW$ with $N$ intermediate waypoints.

For the sake of simplicity, we'll denote $\DVertU$ as $u$ and $\DVertW$ as $w$ in the discussion of MILP, while denoting the set of polytopes $\{P_i, P_j, P_k\}\subset\CoarseEdge$ as $\mathcal P_c$. The list of free configurations in $u$ or $w$ are denoted as $(q_{u,1},\cdots,q_{u,n_u})$ and $(q_{w,1},\cdots,q_{w,n_w})$.
Let $q_t$ be the waypoints in the path, including the start and end configurations, indexed by $t\in\{0,\cdots,N+1\}$. Each waypoint $q_t$ consists of a translational part $p_t$ and a rotational part $R_t$. 
The translational part in a 2D scenario is given by $p_t = (x_t, y_t)$, and in 3D, it extends to $p_t = (x_t, y_t, z_t)$. 

\noindent\textbf{Model.} The objective is to minimize the total 1-norm of the translational displacement along the path. The entire MILP model is formulated in~\cref{eq:MILP-all}:
\begin{subequations}\label{eq:MILP-all}
\begin{flalign}
    \min_{\{p_t\}_t, \{R_t\}_t, \mathcal B} \; &\sum_{t=0}^{N}\|p_{t+1}-p_{t}\|_1, \label{eq:MILP-obj}\\
    \mathrm{s.t.} \; &~\cref{eq:rotation-idx,,eq:MILP-t-or-R,,eq:MILP-startend,,eq:MILP-collision}. \label{eq:all-constraints}
\end{flalign}
\end{subequations}
Here $\mathcal B$ is the set of all binary variables we'll introduce. The objective in~\cref{eq:MILP-obj} can be easily transformed to standard linear expressions with additional variables, which we will not detail here.

\noindent\textbf{Basic constraints.} The rotational part $R_t$ is encoded as a one-hot vector $\beta_{R,t}$ using binary variables. This is achieved through the constraints
\begin{subequations}\label{eq:rotation-idx}
\begin{flalign}
    && &R_t=\sum_{i=1}^{n_R}\beta_{R,t,i}R_i, & \forall t=0,\cdots,N+1 &&\\
    && &1 = \beta_{R,t}^T\mathbf 1, & \forall t=0,\cdots,N+1 &&
\end{flalign}
\end{subequations}
Each element in $R$ is a possible rotation matrix, as specified in ~\cref{sec:dense-node-full}. The index where $b_{R,t}=1$ indicates the selection of the corresponding rotation at waypoint $t$.  

Between two consecutive waypoints, we enforce the object to either translate or rotate, a decision encoded by binary variable $\beta_{\text{or},t}$ in~\cref{{eq:MILP-t-or-R}}. This constraint addresses the challenges associated with encoding collision-free conditions in MILPs. By restricting the motion to either translation or rotation at each step, this simplified model becomes computationally tractable. Specifically, in 3D scenarios, we further limit the object to translational motion only, equivalent to setting $\beta_{\text{or},t}=1$. 
\begin{subequations}\label{eq:MILP-t-or-R}
\begin{align}
    &\|p_{t+1}-p_{t}\|_1\le M\cdot \beta_{\text{or},t}, &&\forall t=0,\cdots,N\\
    &\|\beta_{R,t+1}-\beta_{R,t}\|_1\le M (1-\beta_{\text{or},t}), &&\forall t=0,\cdots,N\\
    &\beta_{\text{or},t}=1,\ (\text{only present in 3D}) &&\forall t=0,\cdots,N
\end{align}
\end{subequations}

The constraints for start and end configuration are that the first and last waypoints are in $u$ and $w$ correspondingly. We introduce two additional one-hot vectors $\beta_{\text{start}}$ and $\beta_{\text{end}}$, which are used to select specific configurations from the sets $u$ and $w$, shown in~\cref{eq:MILP-startend}.
\begin{subequations}\label{eq:MILP-startend}
\begin{align}
    & p_{0}=\sum_{i=1}^{n_u}\beta_{\text{start},i}p_{u,i},\ \ 
    && p_{N+1}=\sum_{i=1}^{n_w}\beta_{\text{end},i}p_{w,i},\\
    & R_{0}=\sum_{i=1}^{n_u}\beta_{\text{start},i}R_{u,i},\ \ 
    && R_{N+1}=\sum_{i=1}^{n_w}\beta_{\text{end},i}R_{w,i},\\
    & \sum_{i=1}^{n_u}\beta_{\text{start},i}=1,\ \ 
    && \sum_{i=1}^{n_w}\beta_{\text{end},i}=1
\end{align}
\end{subequations}

%%%%%%%%%%%%%%%%%%%%%%%%%%%%%%%%%%%%%%%%%%%%%%%%%%%%%%%%%%%%%%%%%%%%%%%%%%%
\subsection{Collision-Avoidance Constraint}
The remaining constraint in the MILP is collision avoidance. While the start ($q_0$) and end ($q_{N+1}$) configurations are already checked to be collision-free, ensuring continuous collision avoidance throughout the motion path is critical. 
Here, the collision avoidance constraint is encoded by enforcing that the each reachable set between two consecutive waypoints $(q_t,q_{t+1})$ ($t=0,\cdots,N$) be completely contained within the union of all relevant collision-free polytopes, $\bigcup \mathcal{P}_c$. Specifically, this is achieved by checking all the boundaries of the reachable sets being inside $\bigcup \mathcal{P}_c$.

%%%%%%%%%%%%%%%%%%%%%%%%%%%%%%%
\begin{figure}[!htb]
    \centering
    \includegraphics[width=0.9\linewidth]{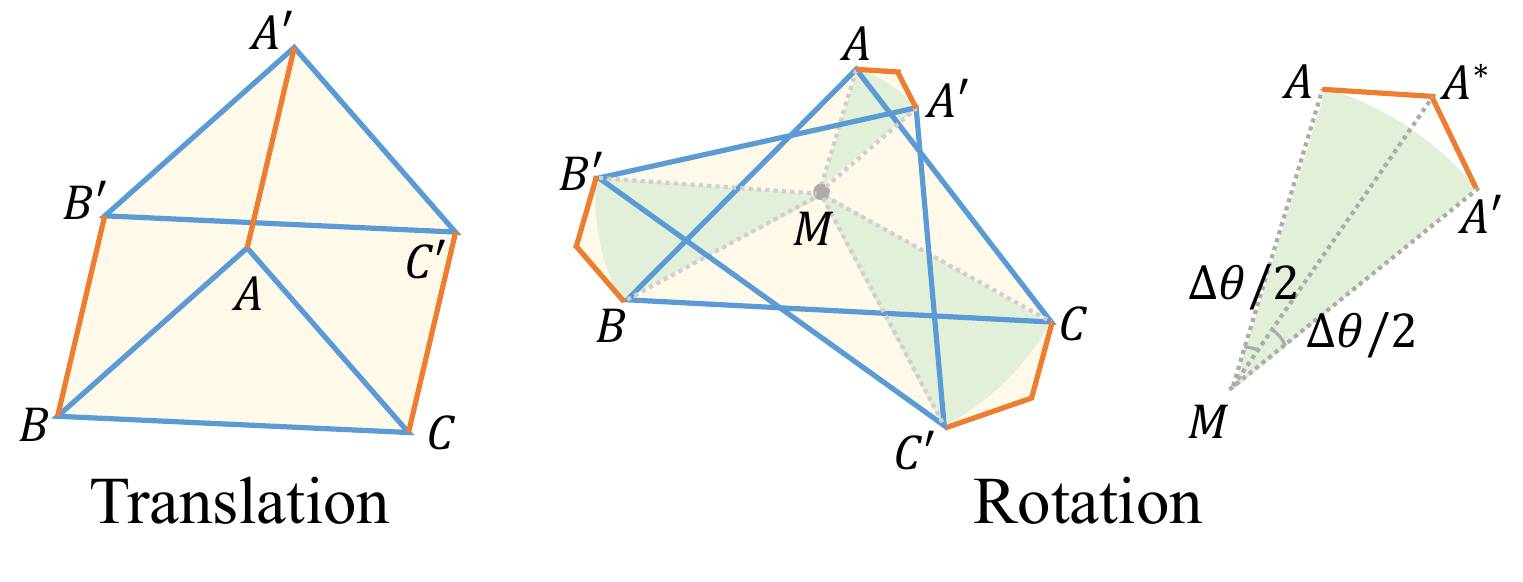}
    \caption{The figure shows the reachable set of 2D objects for translation and rotation scenarios, respectively. On the right, we illustrate the approximation of the green sectors to yellow polytopes.}
    \label{fig:reachable}
\end{figure}
\subsubsection{Compute boundary of reachable set}\label{subsubsec:reachable}
In 2D, the reachable set is computed as~\cref{fig:reachable}. For translation, the boundary contains the blue edges of $\Object$ at configurations $q_t$ and $q_{t+1}$, and orange lines connecting a same vertex at two waypoints. For rotation, the difference is that the lines connecting a same vertex at two waypoints are arcs. While the arcs can not be expressed in linear expression, we overapproximate the arc with two line segments, with $\|MA^*\|\cos(\Delta \theta_{\text{max}}/2)=\|MA\|$. $\Delta\theta_{\text{max}}$ is the maximum rotation angle allowed in one step, selected as $\pi/3$.
In 3D, only translation is allowed inside MILP. The rotation is only allowed inside each $V_d$ vertices, which is not solved by MILP. The reachable set is the union of the polytopes swept by each face of $\Object$, which is also a polytope. Therefore, to check the collisions of the faces of the reachable set, we only need to check the superset - the union of the triangle mesh faces of $\Object$ at $q_t$ and $q_{t+1}$ and the parallelogram swept by each edge of $\Object$.

%%%%%%%%%%%%%%%%%%%%%%%%%%%%%%%
\subsubsection{Collision detection between surface and polytope}
Compared with existing models of collision avoidance in~\cref{fig:collision-avoidance}, we propose a new encoding based on the following corollaries. We just need to check each edge on the boundary of the reachable set to satisfy~\cref{thm:line}. 
\begin{figure}[!htb]
    \centering
    \includegraphics[width=0.9\linewidth]{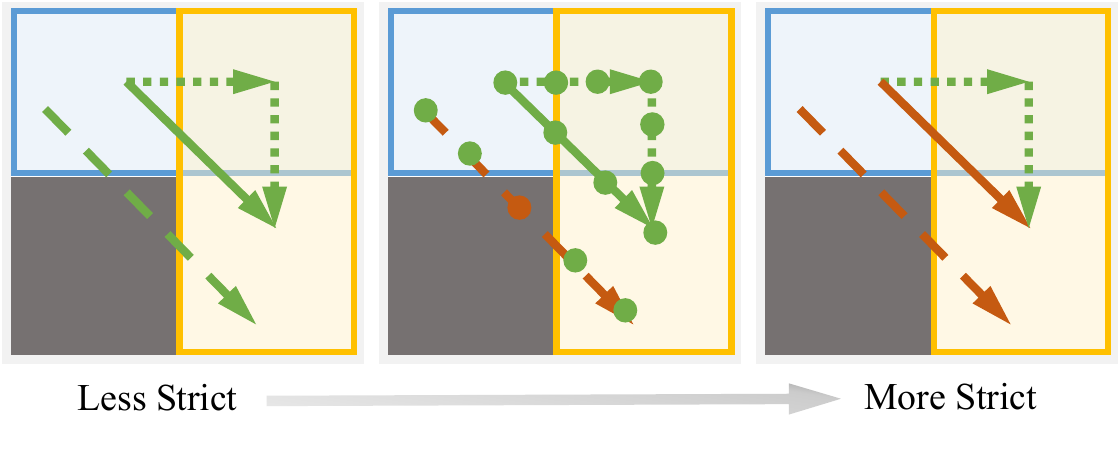}
    \caption{Models of collision avoidance around obstacles (gray). The free space is decomposed into convex polytopes (blue and yellow). Green lines indicate predicted collision-free paths and red lines indicate detected collisions.
    \textit{Left:} Less strict encoding that only checks endpoints of subdivided line segments. The dashed red line demonstrates that endpoint-only checking is insufficient as it cuts through an obstacle.
    \textit{Right}~\cite{8613017}: This overly restrictive model allows only paths like the dotted line where both endpoints lie within the same free region, rejecting the simpler, viable solid line.
    \textit{Middle (\textbf{ours}):} Strict encoding based on~\cref{thm:line}. This approach prevents the dashed line from cutting through obstacles by incorporating additional constraints beyond endpoint checking, while including more paths between regions, like the solid line.
    }
    \label{fig:collision-avoidance}
\end{figure}
\begin{proposition}[Line segment inside two convex polytopes] \label{thm:line}
    Given two convex polytopes $P_i=\{x|A_ix\le b_i\}$, $i=1,2$, and a line segment $l$ with two endpoints $(x_a,x_b)$, $l$ is contained inside the union of $P_1$ and $P_2$ if one of following conditions holds:
    \begin{enumerate}
        \item $x_a$ and $x_b$ are within a same polytope $P_i$,
        \item $x_a$ and $x_b$ are within different polytopes, but there exists a point $x$ satisfying $x\in (P_1\cap P_2)$.
    \end{enumerate}
    \end{proposition}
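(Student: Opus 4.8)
The plan is to prove the two sufficient conditions separately, since both ultimately rest on a single elementary fact: each polytope $P_i = \{x \mid A_i x \le b_i\}$ is convex. Concretely, if $x_a, x_b \in P_i$ then every point of the segment, $\lambda x_a + (1-\lambda) x_b$ with $\lambda \in [0,1]$, satisfies $A_i(\lambda x_a + (1-\lambda) x_b) = \lambda A_i x_a + (1-\lambda) A_i x_b \le \lambda b_i + (1-\lambda) b_i = b_i$, where each term is bounded because $\lambda, 1-\lambda \ge 0$ preserve the componentwise inequalities. Since $l = \{\lambda x_a + (1-\lambda)x_b \mid \lambda \in [0,1]\}$, this inequality chain is the workhorse of the whole argument, and I would state it once as a standalone observation before invoking it twice.

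With that in hand, the first condition is immediate: if both endpoints lie in a single $P_i$, the observation gives $l \subseteq P_i \subseteq P_1 \cup P_2$, and there is nothing further to check. The second condition is where the real work lies. Assuming without loss of generality $x_a \in P_1$ and $x_b \in P_2$, the plan is to use the intersection witness $x^\ast \in P_1 \cap P_2$ as a splitting point, writing $l$ as the concatenation $[x_a, x^\ast] \cup [x^\ast, x_b]$. Because $x^\ast$ lies in both polytopes, the first sub-segment has both endpoints in $P_1$ and the second has both endpoints in $P_2$; applying the convexity observation to each piece yields $[x_a, x^\ast] \subseteq P_1$ and $[x^\ast, x_b] \subseteq P_2$, so their union $l$ is contained in $P_1 \cup P_2$.

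The delicate point, and the step I expect to be the crux, is that this decomposition of $l$ into two sub-segments is valid only when $x^\ast$ genuinely lies \emph{on} $l$: mere nonemptiness of $P_1 \cap P_2$ is not enough. For instance, two long thin rectangles crossing in an ``X'' have nonempty intersection near the crossing, yet a segment joining their far tips leaves the union near its midpoint. I would therefore make explicit in the statement and the proof that the witness is taken on the segment, namely $x^\ast \in l \cap (P_1 \cap P_2)$, which is exactly the object the MILP introduces as a decision variable and constrains to satisfy both $A_1 x^\ast \le b_1$ and $A_2 x^\ast \le b_2$ while remaining a convex combination of the endpoints. Under that reading the argument closes cleanly by convexity alone, with no further case analysis required.
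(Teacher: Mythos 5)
Your proof takes the same route as the paper's: condition (1) is immediate from convexity, and condition (2) is handled by splitting the segment at the intersection witness and applying condition (1) to each piece. The substantive difference is your ``delicate point,'' and it is a genuine catch rather than pedantry. As literally written, condition (2) of the proposition asserts only that $P_1 \cap P_2$ is nonempty, and under that reading the claim is false: your crossed-rectangles example exhibits a segment whose endpoints lie in different polytopes, with $P_1 \cap P_2 \neq \emptyset$, yet whose middle portion leaves $P_1 \cup P_2$. The paper's own proof silently assumes what you make explicit, namely $x \in l$: it concludes from ``segments $(x_a,x)$ and $(x,x_b)$ satisfy condition (1)'' that ``the entire line segment $l$'' is covered, a step that is valid only when $x$ lies on $l$ so that the two sub-segments concatenate to $l$ rather than to a bent two-piece path. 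Your corrected hypothesis $x \in l \cap (P_1 \cap P_2)$ is also what the implementation actually enforces: in~\cref{eq:cond2} the witness must be one of the interpolated points $\eta p_{s,\text{start}} + (1-\eta)p_{s,\text{end}}$, hence on the segment by construction. So your proof is correct, structurally identical to the paper's, and it repairs an imprecision in the statement that the paper's proof glosses over; the same repair should propagate to the triangle and quadrilateral corollaries in the appendix, whose case analyses invoke the existence of points $M,N$ \emph{on} the relevant edges.
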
\begin{proof}
    For condition (1), the convexity of $P_i$ ensures that all points on $l$, being a line segment between $x_a$ and $x_b$, are also contained within $P_i$. If condition (2) holds, line segments $(x_a,x)$ and $(x,x_b)$ both satisfy condition (1). Hence, the entire line segment $l$ remains within $P_1 \bigcup P_2$.
\end{proof}

We further extend the~\cref{thm:line} from checking line segments to triangles and convex quadrilaterals. By enforcing all edges of these shapes satisfying~\cref{thm:line}, we prove that the shapes are contained within the union of two given convex polytopes. The corollaries and proofs are provided in the supplementary materials
\footnote{See: \href{https://sites.google.com/view/realm-rigidmilp}{https://sites.google.com/view/realm-rigidmilp}}. 
So we just need to check the line segments within the sets $\mathcal S_\text{line}$ based on~\cref{thm:line}: \begin{enumerate}
    \item the line segments connecting the object's geometric center and vertices at waypoints $q_t$ and $q_{t+1}$
    \item the edges of $\Object$ at waypoints $q_t$ and $q_{t+1}$
    \item the line segments connecting a same vertex of $\Object$ between waypoints $q_t$ and $q_{t+1}$.
\end{enumerate}
Accurately verifying condition 2) of ~\cref{thm:line} involves a product of two sets of variables - the positions of $x_a,x_b$ and the proportion $(x,x_a)$ occupies. This product cannot be encoded into a mixed integer linear programming. Instead, we divide the line segments into $10$ equal parts and only check the $11$ endpoints in practice. So the MILP constraints can be written as, $\forall e_s=(p_{s,\text{start}},p_{s,\text{end}})\in \mathcal S_\text{line}$:
\begin{subequations}\label{eq:MILP-collision}
\begin{align}
    \begin{split}
        & A_i (\eta p_{s,\text{start}}+(1-\eta)p_{s,\text{end}})\le b_i + M(1-\beta_{s,i,\eta}), \\
        & \quad \forall P_i\in\mathcal P_c,\forall \eta\in \{0, 0.1,\cdots,1\}
    \end{split}\label{eq:AllInPoly1}\\
    & \sum_{i}\beta_{s,i,\eta}\ge 1,\quad \forall \eta\in \{0, 0.1,\cdots,1\}\label{eq:AllInPoly2}\\
    & \beta_{s,\text{cond1},i}\le \beta_{s,i,0},\quad \beta_{s,\text{cond1},i}\le \beta_{s,i,1} \label{eq:cond1}\\
    \begin{split}
        & \sum_\eta (\beta_{s,i,\eta} + \beta_{s,j,\eta}-1)\ge 1- M(1-\beta_{s,\text{cond2},ij}), \\
        & \quad \forall P_i,P_j\in\mathcal P_c,\  P_i\neq P_j
    \end{split}\label{eq:cond2}\\
    & \sum_i \beta_{s,\text{cond1},i} + \sum_{(i,j)}\beta_{s,\text{cond2},ij}\ge 1 \label{eq:cond1or2}
\end{align}
\end{subequations}
where $\beta_{s,i,\eta}$, $\beta_{s,\text{cond1},i}$, $\beta_{s,\text{cond2},ij}$ are binary variables for line segment $e_s$. $\beta_{s,i,\eta}=1$ indicates that interpolated points are inside polytope $P_i$. Similarly, $\beta_{s,\text{cond1},i}$ or $\beta_{s,\text{cond2},ij}$ being $1$ means the corresponding condition is satisfied.
~\cref{eq:AllInPoly1,,eq:AllInPoly2} encodes the preconditions, while~\cref{eq:cond1,,eq:cond2} encodes the two conditions in~\cref{thm:line}, respectively. 

%%%%%%%%%%%%%%%%%%%%%%%%%%%%%%%%%%%%%%%%%%%%%%%%%%%%%%%%%%%%%%%%%%%%%%%%%%%
\subsection{Discussion}~\label{subsec:limitation}
While our approach decomposes the original planning problem and solves each subproblem optimally, it does not guarantee global optimality for the entire path. However, our algorithm is capable of finding multiple solutions or modalities, which can then serve as initial solutions for further refinements. 
It is important to acknowledge that in the 3D case, our method currently only allows translation within the MILP formulation to simplify the computation of the reachable set. We recognize that this simplification may exclude some potential solutions. To address this limitation, future work will focus on extending the approximation techniques used in the 2D case to the 3D domain.

%%%%%%%%%%%%%%%%%%%%%%%%%%%%%%%%%%%%%%%%%%%%%%%%%%%%%%%%%%%%%%%%%%%%%%%%%%%
\begin{figure*}[t]
    \input{figs-final/main-env_vis/main-planning_result}
\end{figure*}

% \vspace{-6pt}
\section{Experiments}\label{sec:experiment}
We empirically validate our method in this section. 
We initially set the number of intermediate waypoints, $N$, to $0$. If no solution can be found with $N=0$, we increase $N$ to $1$. Additionally, rotations are discretized into $12$ distinct rotations for 2D environments and $24$ for 3D environments, respectively.
All experiments were launched on a server with 1 AMD Ryzen Threadripper 3990X 64-Core Processor. We adopt Gurobi 10.0.0~\cite{gurobi} as the MILP solver and handle the graphs $\CoarseGraph,\DenseGraph$ with NetworkX~\cite{networkx}.

\noindent\textbf{Benchmarks.} We collect $8$ benchmark environments, visualized in~\cref{fig:exp-env}. 
We consider two types of objects: convex and non-convex. In 2D environments, the convex object is selected as a stick of length $1.2$ and width $0.1$, while the non-convex object is an L-shape with longer side $1.2$, shorter side $0.8$ and width $0.1$. In 3D environments, the convex object is a pad of size $1.0\times 0.8\times0.1$, and the non-convex object is L-shaped, composed of two pads of size $1.0\times 0.8\times0.1$ and $1.0\times 0.1\times0.4$.

\noindent\textbf{Baselines.} We compare our method with the following multi-query motion planning algorithms, PRM~\cite{kavraki1996prm}, WCO~\cite{kurniawati2008workspace} and GCS~\cite{marcucci2023gcs}. 
PRM and WCO, sampling-based methods, are evaluated across $5$ trials for each problem set with an offline phase of 15 seconds allocated to develop a roadmap before each trial. 
GCS constructs a graph of convex set in configuration space~\cite{werner2023irisclique}, followed by optimizing for an optimal path with piece-wise linear curves~\cite{marcucci2023gcs}. 
To ensure a fair comparison in generating an IRIS cover for both GCS and our method, though their application in different spaces, we select the same set of hyperparameters. Specifically, we set $n_v=512$, $\alpha=0.95$ and select $n_s=5$ for our method.

    \begin{table}[h]
        \centering
        \caption{Comparison of our method and baselines on benchmarks.}
        \label{tab:planning-general}
        \begin{subtable}{0.5\textwidth}
  \centering
  \caption{Results for convex objects (A stick in 2D scenarios and a pad for 3D).}
  \setlength{\tabcolsep}{3pt}
  \begin{tabular}{ccccccccc}
    \toprule
    &\multicolumn{4}{c}{Online Time $\downarrow$ (ms)}    &  \multicolumn{4}{c}{Waypoint Number $\downarrow$}    \\
    \cmidrule(r){2-5}              \cmidrule(r){6-9}
    scenario  & ours   & GCS & PRM & WCO & ours  & GCS & PRM & WCO \\
    \midrule
    2d-corner   & $\mathbf{2.8}$ & $607.8$ & $105.3$ & $115.3$
                & $8$ & $\mathbf{6}$ & $20$ & $17$ \\
    2d-bugtrap     & $\mathbf{15.1}$ & INF & $116.4$ & $23.8$
                & $\mathbf{21}$ & INF & $40$ & $23$\\
    2d-maze        & $\mathbf{57.1}$ & $7.2\text{e}5$ & $111.9$ & $211.2$
                & $29$ & $\mathbf{21}$ & $42$ & $24$ \\
    SCOTS       & $\mathbf{55.0}$ & INF & $508.3$ & $360.4$
                & $85$   & INF & $97$ & $\mathbf{76}$ \\
    2d-peg      & $\mathbf{22.1}$ & INF & $122.6$ & $87.5$
                & $\mathbf{13}$ & INF & $31$ & $27$ \\
    \midrule
    3d-easy     & $\mathbf{63.8}$ & $1.2\text{e}5$ & $129.3$ & $244.4$
                & $10$ & $\mathbf{9}$ & $10$ & $24$ \\
    3d-narrow   & $\mathbf{212.4}$ & $2.0\text{e}5$ & $1045.0$ & $267.0$
                & $14$ & $32$ & $\mathbf{12}$ & $25$ \\
    3d-peg      & $\mathbf{48.1}$ & INF & $287.1$ & $162.1$
                & $20$ & INF & $\mathbf{6}$ & $21$ \\
    \bottomrule
  \end{tabular}
\end{subtable}\\
\vspace{3pt}
\begin{subtable}{0.5\textwidth}
  \centering
  \caption{Results for non-convex objects (L-shape for all scenarios).}
  \setlength{\tabcolsep}{3pt}
  \begin{tabular}{ccccccccc}
    \toprule
    &\multicolumn{4}{c}{Online Time $\downarrow$ (ms)}    &  \multicolumn{4}{c}{Waypoint Number $\downarrow$}    \\
    \cmidrule(r){2-5}              \cmidrule(r){6-9}
    scenario  & ours   & GCS & PRM & WCO & ours  & GCS & PRM & WCO \\
    \midrule
    2d-corner      & $\mathbf{2.8}$ & $4.1\text{e}4$ & $105.9$ & $258.3$
                & $\mathbf{9}$ & $11$ & $19$ & $21$ \\
    2d-bugtrap     & $\mathbf{11.5}$ & INF & $113.8$ & $25.4$
                & $\mathbf{18}$ & INF & $40$ & $29$ \\
    2d-maze        & $\mathbf{54.8}$ & $2.6\text{e}4$ & $110.4$ & $60.1$
                & $29$ & $\mathbf{25}$ & $43$ & $39$\\
    SCOTS       & $\mathbf{47.8}$ & INF & $3822.2$ & $473.8$
                & $97$ & INF & $98$ & $\mathbf{72}$\\
    2d-peg      & $\mathbf{12.8}$ & INF & $115.8$ & $106.6$
                & $\mathbf{13}$ & INF & $28$ & $17$\\
    \midrule
    3d-easy     & $\mathbf{66.7}$ & $2.1\text{e}5$ & $269.3$ & 106.3
                & $10$ & $\mathbf{5}$ & $7$ & $21$ \\
    3d-narrow   & $\mathbf{257.5}$ & INF & $593.7$ & $336.5$
                & $16$ & INF & $\mathbf{11}$ & $23$\\
    3d-peg      & $\mathbf{52.1}$ & INF & $824.7$ & $224.3$
                & $16$ & INF & $\mathbf{11}$ & $23$ \\
    \bottomrule
  \end{tabular}
\end{subtable}
    \end{table}
    
\subsection{Planning Results}
\noindent\textbf{Metrics.} In evaluating the performance of the methods, we assess two key metrics. First, we measure the \textbf{online time} required to compute a path when a new start and goal configuration pair is specified. 
Secondly, we evaluate the \textbf{number of waypoints} in the solution path. This metric serves as an indicator of the path's complexity and efficiency in navigating from start to goal. A lower number of waypoints generally suggests a smoother execution in practice.

\noindent\textbf{Result} 
We compare with baselines and show the results in~\cref{tab:planning-general}. We also visualized the solution paths in~\cref{fig:exp-env}. 

\paragraph{Online Time.} 
Our method consistently achieved the shortest online computation times, typically under 100 ms across most test scenarios. This efficiency contrasts with the GCS, which struggles to find feasible solutions in several environments. This is particularly evident in configurations involving narrow tunnels not covered by the C-IRIS strategy, which is explored further in \cref{sec:exp-gcs}.

The online time for our method involves retrieving the path from $\DenseGraph$ and connecting the start and end configurations to the graph. In practice, we set the number of intermediate waypoints $N=0$, allowing us to leverage matrix operations for quick verification instead of solving MILPs for faster online computation.
In contrast, the online stage for GCS involves solving an optimization problem. For PRM and WCO, it includes discrete collision checking to connect start and end configurations to a dense graph, much larger than ours.

\paragraph{Waypoint Comparison.} 
GCS, which optimizes for path optimality, generally produces smoother paths when it can find a solution, leading to fewer waypoints. 
In 2D environments, our method typically finds paths with fewer waypoints compared to sampling-based methods, indicating a more efficient pathfinding strategy. But the situation differs in 3D. This is because we over-restrict the free configurations on the boundaries in 3D scenarios, so the solution space is much smaller for our method and our method tends to generate more complex and twisted paths.

\paragraph{Impact of Object Shapes.} 
Our method allows the $\CoarseGraph$ to be reused for different object shapes. With the dimensions of the object being similar, our method and sampling-based methods are able to maintain a consistent online time for different objects. Conversely, GCS performance is significantly impacted by changes in object shape, altering the geometry of the free configuration space and affecting feasibility and computation times across various environments.

\begin{table*}[b]
    \centering
    \caption{Comparison of our method with sampling-based methods for an L-shaped object navigating a bugtrap environment~\cref{fig:env-bugtrap}. Online time is evaluated in both large and narrow environments. In large settings, the width mid-right corridor remains constant while the dimensions of the entire environment are doubled. In narrow settings, the corridor width is reduced to $60\%$ of its original size. In L \& N settings, the corridor width is reduced and the dimensions of the environment are doubled. The online time limit for PRM is $40$s.}
    \label{tab:prm}
    \begin{subtable}{1.\textwidth}
  \centering
  \begin{tabular}{ccccccccccc}
    \toprule
    Time (ms)     & ours   & PRM-$0.25$x & WCO-$0.25$x & PRM-$0.1$x & WCO-$0.1$x  & PRM-$0.05$x & WCO-$0.05$x  \\
    \midrule
     original       & $\mathbf{11.6}$ 
     & $113{\pm0.5}$ & $25.4{\pm8.1}$
     & $109{\pm1.4}$ & $30.7{\pm15.4}$
     & $127{\pm22.5}$ & $38.5{\pm19.1}$\\
     large          & $\mathbf{13.4}$ 
     & $521{\pm 823}$ & $39.6\pm19.3$
     & $4.2e3{\pm2.2e3}$ & $51.2\pm25.0$
     & $1.5e4{\pm1.2e4}$ & $69.3\pm33.7$\\
     narrow         & $\mathbf{13.0}$ 
     & $1.4e3{\pm 2.5e3}$ & $40.8\pm16.3$
     & $5.8e3{\pm 4.1e3}$ & $47.2\pm19.8$
     & $1.2e4{\pm 1.0e4}$ & $56.6\pm24.2$\\
     L \& N         & $\mathbf{17.4}$ 
     & $2.2e4{\pm1.5e4}$ & $279.3\pm208.4$
     & $3.7e4{\pm5.6e3}$ & $384.5\pm 291.5$
     & $3.9e4{\pm4.2e2}$ & $487.5\pm364.6$\\
    \bottomrule
  \end{tabular}
\end{subtable}
\end{table*}

\subsection{Comparison with Sampling-based Algorithms}\label{subsec:compare-prm}
Sampling-based methods, such as PRM and WCO, face challenges when scaling to larger environments. One critical hyperparameter affecting their performance is the resolution of collision checking. To assess the adaptability of these methods, we tested their efficiency with varying distances between checked states, specifically at $0.25$x, $0.1$x, and $0.05$x of the object's length, as detailed in~\cref{tab:prm}.

Our experiments revealed that both PRM and WCO struggle to maintain performance as the environment scales. In contrast, the offline computation time for our method consistently remains around 15 seconds across all test cases, attributable to the minimal changes in the structure of $\CoarseGraph$ and $\DenseGraph$ besides scaling. Similarly, the online execution time remains relatively consistent across scenarios. This is because the MILP approach in our method performs exact continuous collision checking with infinite resolution, enabling it to handle larger environments more effectively.
Moreover, we observed that WCO significantly outperforms PRM when scaling, demonstrating the benefit of utilizing workspace connectivity information. By leveraging this information, WCO can more efficiently explore the space and find feasible paths in larger and narrower environments compared to PRM.

\begin{table}[th]
    \centering
    \caption{Metrics of offline stages for our method and GCS.}
    \label{tab:gcs}
    \begin{subtable}{0.5\textwidth}
  \centering
  \begin{tabular}{ccccccc}
    \toprule
    &\multicolumn{3}{c}{Offline Time $\downarrow$ (s)}    
    &\multicolumn{2}{c}{\# IRIS regions $\downarrow$}    
    &\multicolumn{1}{c}{\# MILP $\downarrow$}\\
    \cmidrule(r){2-4}  \cmidrule(r){5-6}   \cmidrule(r){7-7}
    scenario  & $\CoarseGraph$ & $\DenseGraph$  & GCS & ours   & GCS  & ours \\
    \midrule
    corner      & $1.1$ & $0.5$ & $23.7$ 
                & $2$ & $24$ 
                & $6$ \\
    bugtrap     & $4.1$ & $10.7$ & $21.6$ 
                & $7$ & $42$ 
                & $79$ \\
    maze        & $6.1$ & $130.3$ & $251.4$ 
                & $19$ & $131$ 
                & $1624$ \\
    SCOTS       & $4.0$ & $45.7$ & $79.3$ 
                & $27$ & $78$ 
                & $355$ \\
    2d-peg      & $5.2$ & $18.8$ & $26.5$ 
                & $8$ & $59$ 
                & $170$ \\
    \midrule
    3d-easy     & $5.3$ & $15.8$ & $27.8$ 
                & $4$ & $55$ 
                & $42$ \\
    3d-narrow   & $7.7$ & $115.5$ & $96.9$ 
                & $8$ & $95$ 
                & $471$ \\
    3d-peg      & $16.4$ & $10.1$ & $166.9$ 
                & $4$ & $164$ 
                & $15$ \\
    \bottomrule
  \end{tabular}
\end{subtable}
\end{table}
\subsection{Comparison with GCS}~\label{sec:exp-gcs}
    The major difference between our method and GCS lies in space decomposition: our method is workspace-based while GCS works in C-space. 
    The dimensionality of C-space is usually higher than that of the workspace, accompanied by inherent geometrical and topological differences between these two spaces.
    In the demonstrated scenario~\cref{fig:vis-cfree}, where the objective is for a stick to navigate out of a trap through a narrow gap. The collision-free C-space consists of two large regions, connected by three thin tunnels. 
    Despite dense sampling revealing the tunnels, GCS, with $42$ polytopes, failed to detect any narrow passage, thus incapable of finding a feasible solution for the problem.
    On the contrary, our method identifies the critical pathways and achieves a near-complete coverage in the workspace.
    This distinction indicates a challenge in C-space representation, pathways that occupy a small volume in the workspace can correspond to an even smaller fraction of the free space in C-space, considering the increased dimensionality. This observation underscores the necessity of carefully designing a sampling strategy to ensure reliability.
    
    \begin{figure}[tb]
        \centering
        \includegraphics[width=.9\linewidth]{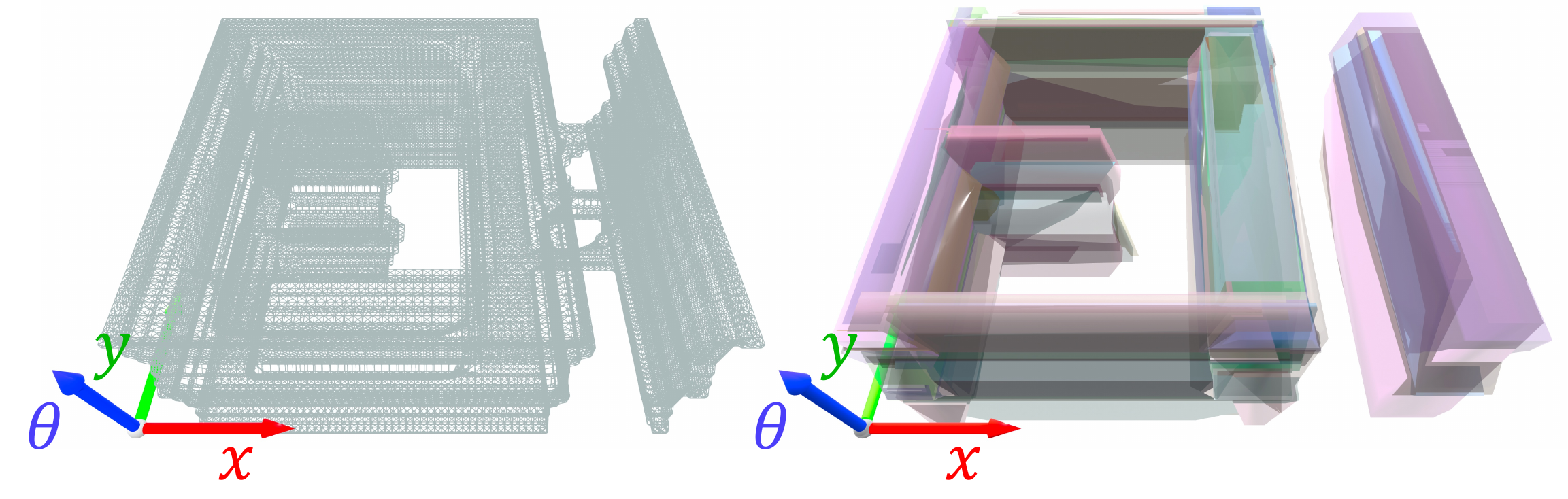}
        \caption{A 2D motion planning problem consists of a stick getting out of a trap and passing through a narrow gap, where our method is able to achieve almost $100\%$ coverage in workspace~\cref{fig:env-bugtrap}. 
        The C-space is a subset of $SE(2)$, with two translation axes and a periodic axis corresponding to the rotation. 
        We visualize the collision-free C-space by sampling (left) and the C-IRIS cover acquired from GCS (right).
        }
        \label{fig:vis-cfree}
    \end{figure}
    
    We also conduct a comprehensive comparison of time consumption and problem size for the offline stage. 
    The offline stage for our method can be roughly divided into two: the construction of $\CoarseGraph$ (\cref{sec:coarse_graph}) and $\DenseGraph$ (\cref{sec:dense_graph}). 
    % coarse-graph
    The problem size for $\CoarseGraph$ is the number of IRIS regions $(|\CoarseVert|)$ and its computation only needs to be conducted once across different objects in the same environment. But the graph in GCS needs to be recomputed when either the environment or the object changes.
    It can be observed that the number of IRIS regions we need to grow is significantly lower than GCS while remaining high feasibility.
    % dense-graph
    Moving to $\DenseGraph$, the problem size is determined by the number of MILP required. Despite our method necessitating a substantial number of MILPs, the MILPs are designed to be of small sizes, and batch processing is highly preferred. In the experiment, we assign $2$ threads for each MILP.
    The offline computation time for $\DenseGraph$ scales linearly to the number of MILP. Conversely for GCS, the number of variables is proportional to the number of IRIS regions, which makes the online time extremely long for GCS when the number of IRIS regions is large.
    
    % total time
    Across the majority of test cases, our method demonstrated reduced offline computation times while maintaining feasibility. 
    However, it's important to note the inherent advantage of GCS in its design for globally optimal solutions. While our method focuses on achieving faster computation and higher feasibility, this comes at the sacrifice of optimality.

\section{Conclusion} 
\label{sec:conclusion}
In this study, we introduced a path-planning approach for rigid body objects using MILP with better scalability and efficiency. 
By structuring our method into three distinct stages — workspace decomposition, path segment validation, and online query for rapid path retrieval, we have demonstrated improved planning efficiency in multi-query scenarios via comprehensive experiments against baselines.

% % \addtolength{\textheight}{-12cm}   % This command serves to balance the column lengths
% %                                   % on the last page of the document manually. It shortens
% %                                   % the textheight of the last page by a suitable amount.
% %                                   % This command does not take effect until the next page
% %                                   % so it should come on the page before the last. Make
% %                                   % sure that you do not shorten the textheight too much.

% %%%%%%%%%%%%%%%%%%%%%%%%%%%%%%%%%%%%%%%%%%%%%%%%%%%%%%%%%%%%%%%%%%%%%%%%%%%%%%%%

% \section*{ACKNOWLEDGMENT}

%%%%%%%%%%%%%%%%%%%%%%%%%%%%%%%%%%%%%%%%%%%%%%%%%%%%%%%%%%%%%%%%%%%%%%%%%%%%%%%%

\bibliographystyle{IEEEtran}
\bibliography{IEEEtranBST/IEEEabrv, references.bib}  % .bib

% \twocolumn
\clearpage
\appendix
\setcounter{proposition}{1}
\subsection{Constructing Vertices $\DenseVert$}~\label{sec:connectivity-patch}
In constructing the vertices $\DenseVert$, we group free configurations on the grid to form disjoint subgraphs $\{G_{p,ij,n}\}_{n=1}^{n_ij}$. The simplest approach is to connect free configurations which are adjacent on the discretization grid (for example in 2d, the discretization grid is a 2-dimensional plane of discretized translation position and discretized rotation angle). However, simply connecting adjacent free configurations may lead to collisions for non-point objects. To address this, we enlarge the polytope object $\Object$ during collision checking. This ensures that movements between adjacent configurations remain collision-free.

\noindent\textbf{2D Scenario.} For each discrete configuration on this grid, we conduct collision checks for a bloated object $O'$. Each edge on object is $\Object$ bloated by distance $\max(l_\lambda,l_\theta)$ where $l_\lambda$ is the maximum distance between two consecutive translation points and $l_\theta$ is the maximum distance a point on $\Object$ could possibly travel between consecutive rotations, calculated by $\max_i 2\|v_{O,i}\|_2\sin(\pi/n_R)$.

\noindent\textbf{3D Scenario.}  We over-approximate the object by a ball of radius $\max_i|v_{O,i}|_2$ for this collision checking. This ball accommodates arbitrary rotations for configurations that share the same translation. For configurations on the same facet that share the same rotation, connectivity does not require verification due to the convexity of $P_i$ and $P_j$. This is justified as follows: Object $\Object$ is divided into two parts by the facet of the reachable set, one part resides in $P_i$ and the other inside $P_j$. This division remains fixed for configurations on the same facet under the same rotation. Given that both $P_i$ and $P_j$ are convex, any swept region of each part of $\Object$ remains confined within its respective convex polytope.

\subsection{Proofs of Corollaries}
We will provide proofs of two corollaries presented in the paper here.
\setcounter{corollary}{0}
\begin{corollary}[Triangle] \label{thm:triangle}
    Given two convex polytopes $P_i=\{x|A_ix\le b_i\}$, $i=1,2$, and a triangle $T$, $T$ is contained inside the union of $P_1$ and $P_2$ if all edges of $T$ satisfy at least one of the conditions in Proposition 1.%~\cref{thm:line}.
\end{corollary}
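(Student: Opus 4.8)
The plan is to reduce the two-dimensional containment claim to the one-dimensional line-segment result of Proposition~\ref{thm:line}, exploiting convexity of the triangle together with convexity of the two polytopes. First I would fix the triangle $T$ with vertices $x_1,x_2,x_3$ and its three edges $e_1=(x_1,x_2)$, $e_2=(x_2,x_3)$, $e_3=(x_3,x_1)$, each of which is assumed to satisfy at least one of the two conditions of Proposition~\ref{thm:line}, so that each edge lies entirely in $P_1\cup P_2$. The goal is to upgrade this boundary containment to containment of the whole filled triangle.

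The key observation I would use is that any point in the interior of $T$ lies on some chord whose endpoints are on the boundary $\partial T$, i.e. on the edges. Concretely, take an arbitrary point $y\in T$; I would write $y$ as a point on a segment $(z_1,z_2)$ with $z_1,z_2\in\partial T$ lying on (possibly different) edges of $T$. Since the intersection $P_1\cap P_2$ is itself convex and nonempty whenever condition~(2) is invoked, and since each edge is covered by $P_1\cup P_2$, the heart of the argument is to show that the segment $(z_1,z_2)$ also satisfies one of the two conditions of Proposition~\ref{thm:line}, after which Proposition~\ref{thm:line} applies directly to conclude $(z_1,z_2)\subset P_1\cup P_2$, hence $y\in P_1\cup P_2$.

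To make the segment argument clean, I would split into cases according to how the edges of $T$ are distributed between the polytopes. If all three vertices of $T$ lie in a single polytope $P_i$, then by convexity of $P_i$ the entire triangle lies in $P_i$ and we are done immediately. Otherwise the vertices are split between $P_1$ and $P_2$; since some edge then has its endpoints in different polytopes, condition~(2) must hold for that edge, so $P_1\cap P_2\neq\emptyset$. I would then argue that $T$ is partitioned by the convex set $P_1\cap P_2$: each vertex $z_k$ of any interior chord inherits membership in $P_1$ or $P_2$ from the edge it lies on, and because $P_1\cap P_2$ is convex and meets the triangle, any chord crossing between the $P_1$-region and the $P_2$-region passes through $P_1\cap P_2$, so the chord again satisfies condition~(2) of Proposition~\ref{thm:line}.

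The main obstacle I anticipate is the case analysis establishing that an \emph{arbitrary} interior chord, not merely an edge, satisfies condition~(1) or~(2); one must rule out the configuration in which a chord's two endpoints sit in different polytopes yet the chord somehow avoids $P_1\cap P_2$. I expect this to follow from the convexity of $P_1\cap P_2$ together with the fact that each endpoint of the chord lies on an edge already known to be contained in $P_1\cup P_2$, but spelling out that the ``switch'' between regions forces passage through the intersection is the delicate step. A cleaner alternative I would also consider is to cover $T$ by the two convex subregions $T\cap P_1$ and $T\cap P_2$ directly and argue their union is all of $T$ using that $\partial T\subset P_1\cup P_2$; either route reduces the claim to Proposition~\ref{thm:line} applied edgewise, which is the intended mechanism.
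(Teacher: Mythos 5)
Your overall strategy (reduce containment of $T$ to containment of chords, then invoke \cref{thm:line} on each chord) can in principle be made to work, but as written it has a genuine gap at exactly the step you flag as delicate, and the justification you offer for that step is not merely incomplete but false as stated. You claim that because $P_1\cap P_2$ is convex and meets the triangle, ``any chord crossing between the $P_1$-region and the $P_2$-region passes through $P_1\cap P_2$.'' Convexity and nonemptiness of the intersection do not imply this. Take $P_1=[0,3]\times[0,1]$ and $P_2=[0,1]\times[0,3]$, so that $P_1\cap P_2=[0,1]^2$ is convex and nonempty; the segment from $z_1=(2.5,\,0.5)\in P_1$ to $z_2=(0.5,\,2.5)\in P_2$ has midpoint $(1.5,\,1.5)$ lying in neither polytope, so it never meets $P_1\cap P_2$. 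What actually forces a chord with endpoints in different polytopes to cross the intersection is prior knowledge that the chord is contained in $P_1\cup P_2$ (then a connectedness argument applies) --- but that containment is precisely the statement being proved, so the argument as sketched is circular. Your fallback (``argue that $(T\cap P_1)\cup(T\cap P_2)=T$ using $\partial T\subset P_1\cup P_2$'') is likewise a restatement of the goal rather than an argument.

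The missing ingredient is that you under-use the hypothesis: condition (2) of \cref{thm:line}, applied to the \emph{edges} of $T$, supplies intersection points located \emph{on those edges}, not just the bare fact $P_1\cap P_2\neq\emptyset$. In the nontrivial case $A,B\in P_1$, $C\in P_2$, this gives $M\in[B,C]\cap P_1\cap P_2$ and $N\in[A,C]\cap P_1\cap P_2$. The paper's proof then splits $T$ into $\triangle ABM$, $\triangle AMN$, $\triangle MNC$: each of these has all vertices inside a single convex polytope (since $M,N$ lie in both), hence lies entirely in $P_1$ or in $P_2$, and their union is $T$. If you prefer to keep your chord formulation, it can be repaired the same way: consider only chords emanating from the vertex $C$; every chord from $C$ to a point of $[A,B]$ must cross the segment $[M,N]$, which lies in $P_1\cap P_2$ by convexity of the intersection, so that chord satisfies condition (2) of \cref{thm:line} with an intersection point on the chord itself, and the proposition finishes the job. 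Either way, the proof has to route through these edge-borne points $M,N$; abstract convexity of $P_1\cap P_2$ cannot carry the step. (You would also need the final remark in the paper's proof handling vertices that lie in both polytopes, a minor addition.)
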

\begin{figure}[h]
    \centering
    \includegraphics[width=1\linewidth]{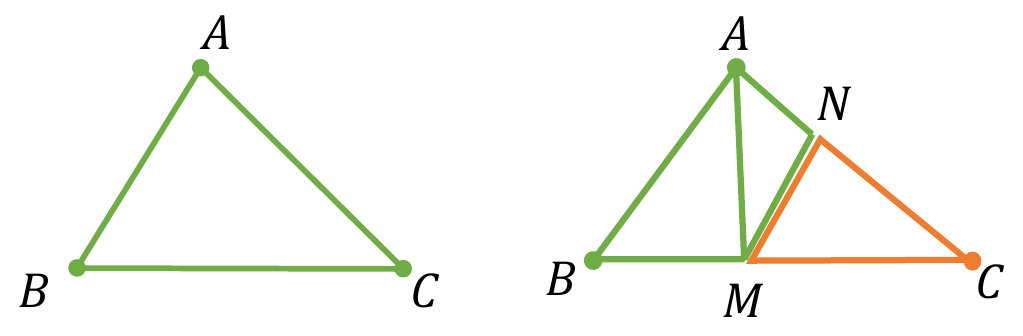}
    \caption{
    \textit{Left:} all the vertices are contained in the same green polytope.
    \textit{Right:} one vertex ($C$) is not contained in the same polytope as vertices $A,B$.
    }
    \label{fig:proof-triangle}
\end{figure}
\begin{proof}
We first consider the case when all the vertices of triangle $T$ are contained in exactly one of $P_i$, there are two possible cases as demonstrated in~\cref{fig:proof-triangle}: 
\begin{enumerate}
    \item All vertices of triangle $T$ are contained within the same convex polytope, here referred to as the green polytope. Since convex polytopes maintain the property that any point on the line segment between any two points within the polytope also lies within the polytope, it follows that the entire area of triangle $T$ is contained within the green polytope.
    \item One vertex $C$ is located in a different polytope from vertices $A,B$ (\textit{right}). Given that the edge $BC$ satisfies Proposition 1%~\cref{thm:line}
    , there exists a point $M$ on $BC$ such that $M \in P_1 \cap P_2$. Similarly, for edge $AC$, there exists a point $N$ on $AC$ that also belongs to $P_1 \cap P_2$. 
    The triangle $T$, i.e. $\triangle ABC$ can be divided into three triangles: $\triangle ABM,\triangle AMN, \triangle MNC$. Since each triangle's vertices are entirely contained within at least one polytope and each polytope is convex, each of these smaller triangles is contained within a polytope. Hence, the entire triangle $T$ is contained within the union of $P_1$ and $P_2$.
\end{enumerate}

When extending the conditions to allow vertices of triangle $T$ to be contained in more than one of $P_i$, it is equivalent to considering an enlargement of the polytopes $P_1$ and $P_2$. Therefore, the proposition that triangle $T$ is contained within the union of the polytopes still holds.
\end{proof}

\begin{corollary}[Convex quadrilateral] \label{thm:Parallelogram}
    Given two convex polytopes $P_i=\{x|A_ix\le b_i\}$, $i=1,2$, and a convex quadrilateral $P_0$, $P_0$ is contained inside the union of $P_1$ and $P_2$ if all edges of $P_0$ satisfy at least one of the conditions in Proposition 1.%~~\cref{thm:line}.
\end{corollary}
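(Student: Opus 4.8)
The plan is to prove \cref{thm:Parallelogram} by reducing the convex quadrilateral to triangles and invoking \cref{thm:triangle}, which has already been established. The key observation is that any convex quadrilateral $P_0$ with vertices, say $A,B,C,D$ in cyclic order, can be split by a diagonal into two triangles, for instance $\triangle ABC$ and $\triangle ACD$. If both triangles are contained in $P_1\cup P_2$, then so is their union $P_0$. The obstacle is that \cref{thm:triangle} requires \emph{all three} edges of each triangle to satisfy \cref{thm:line}, but the diagonal $AC$ is an interior segment of $P_0$ and is not one of the four original edges, so we are not given a priori that $AC$ satisfies either condition of \cref{thm:line}.

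First I would set up the decomposition explicitly, labeling the four edges $AB$, $BC$, $CD$, $DA$ of $P_0$, each of which is assumed to satisfy \cref{thm:line}. The main step is to handle the diagonal $AC$. I would argue by cases on how the four vertices distribute among $P_1$ and $P_2$. If all four vertices lie in a single polytope $P_i$, convexity immediately gives $P_0\subset P_i$ and we are done. Otherwise, at least one edge straddles the two polytopes; since each straddling edge satisfies condition (2) of \cref{thm:line}, its crossing point lies in $P_1\cap P_2$. The strategy is to choose the diagonal so that it connects two vertices lying in the same polytope whenever the vertex partition permits, which guarantees that diagonal satisfies condition (1) of \cref{thm:line} trivially; then both resulting triangles have all edges satisfying \cref{thm:line}, and \cref{thm:triangle} finishes the argument.

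The hard part will be the case where the two polytopes separate the four vertices so that \emph{neither} diagonal connects two same-polytope vertices — i.e. the vertices alternate $A,C\in P_1\setminus P_2$ and $B,D\in P_2\setminus P_1$ around the quadrilateral. In this configuration each of the four boundary edges is a straddling edge and contributes a crossing point in $P_1\cap P_2$; let these crossing points be $M\in AB$, $N\in BC$, $S\in CD$, $T\in DA$, all lying in the convex set $P_1\cap P_2$. The plan is to use the convexity of $P_1\cap P_2$ to conclude that the inner quadrilateral $MNST$ is entirely contained in $P_1\cap P_2\subset P_1\cup P_2$, while the four corner triangles $\triangle AMT$, $\triangle BMN$, $\triangle CNS$, $\triangle DST$ each have all vertices in a single polytope and are therefore each contained in that polytope by convexity. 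Since $P_0$ is exactly the union of these four corner triangles and the central quadrilateral, the whole of $P_0$ lies in $P_1\cup P_2$.

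Finally, mirroring the closing remark of \cref{thm:triangle}, I would observe that allowing a vertex to lie in both polytopes simultaneously (rather than in exactly one) only enlarges the effective regions $P_1,P_2$, so the containment conclusion is unaffected; this lets us assume without loss of generality the clean partition used in the case analysis. The only genuine subtlety is verifying that the central region $MNST$ is a (possibly degenerate) convex quadrilateral whose vertices all lie in the convex set $P_1\cap P_2$ — but this follows directly from the convexity of the intersection, so no delicate computation is required, and the proof is complete once the alternating case is dispatched.
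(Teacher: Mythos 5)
Your enumeration of the $2$--$2$ cases is backwards, and this creates a real gap. In the alternating configuration you single out as ``the hard part'' ($A,C\in P_1\setminus P_2$ and $B,D\in P_2\setminus P_1$), \emph{both} diagonals join two same-polytope vertices: $AC$ joins two points of $P_1$, and $BD$ joins two points of $P_2$. So that case is dispatched immediately by your own diagonal strategy (split along $BD$; each triangle then has two original edges plus a diagonal satisfying condition (1) of \cref{thm:line}, and \cref{thm:triangle} applies) --- this is precisely case 4 of the paper's proof. Your four-crossing-point construction with the corner triangles and the central quadrilateral $MNST\subset P_1\cap P_2$ is geometrically sound, but it solves a case that never needed it.

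The configuration your proof does not cover is the one where two \emph{adjacent} vertices lie in each polytope, say $A,B\in P_1\setminus P_2$ and $C,D\in P_2\setminus P_1$. Here neither diagonal joins same-polytope vertices ($A$ and $C$ lie in different polytopes, as do $B$ and $D$), so ``choose a good diagonal'' is impossible; and your fallback construction cannot be executed either, because only the edges $BC$ and $DA$ straddle the two polytopes --- the edges $AB$ and $CD$ have both endpoints in a single polytope, so condition (2) gives no crossing points on them and the four points $M,N,S,T$ needed for your central quadrilateral do not all exist. This is case 3 of the paper's proof, and it requires a different cut: pick $M\in BC$ and $N\in DA$ with $M,N\in P_1\cap P_2$ (these exist because $BC$ and $DA$ satisfy condition (2) of \cref{thm:line}), and split $P_0$ along the segment $MN$ into the quadrilateral $ABMN$, whose four vertices all lie in $P_1$, and the quadrilateral $MNDC$, whose four vertices all lie in $P_2$; convexity of $P_1$ and of $P_2$ then contains each piece, and their union is $P_0$. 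With this case added, your argument would be complete; your closing reduction of ``vertex in both polytopes'' to the clean partition is fine and matches the paper.
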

\begin{figure}[h]
    \centering
    \includegraphics[width=1\linewidth]{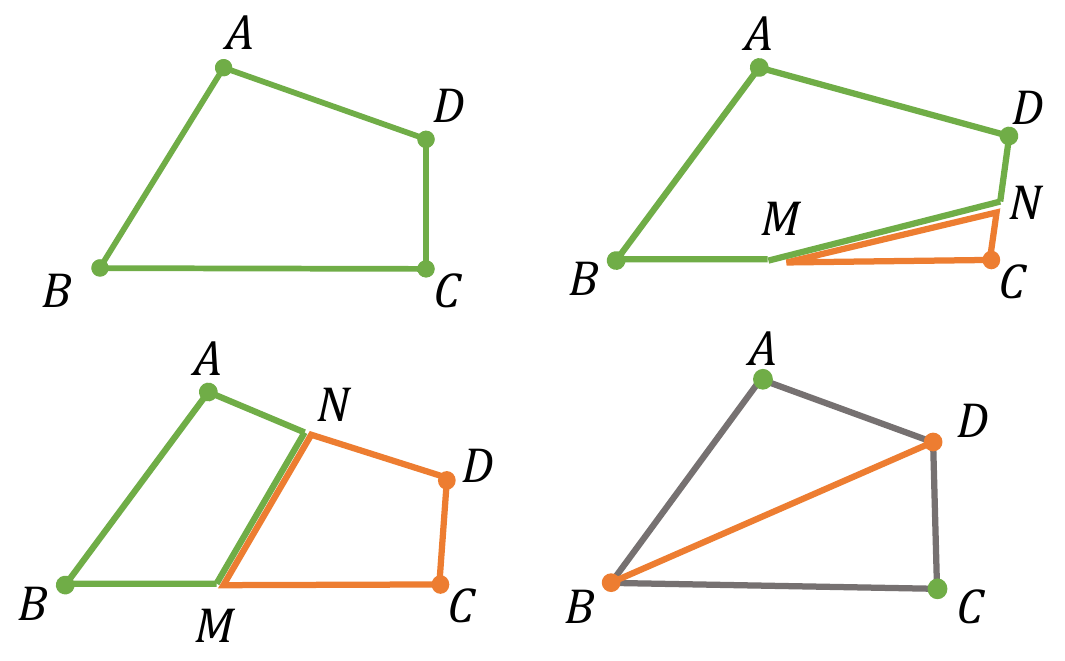}
    \caption{
    \textit{Upper left:} all the vertices are contained in the same green polytope.
    \textit{Upper right:} vertex $C$ is not contained in the same polytope as the other vertices ($A, B, D$)
    \textit{Lower left:} two adjacent vertices ($C,D$) are contained in a different polytope as $A,B$.
    \textit{Lower right:} vertices along the same diagonal ($AC$ and $BD$) are contained in a same polytope.
    }
    \label{fig:proof-quadrilateral}
\end{figure}
\begin{proof}
Similar to the proof of triangles, we first consider the case when all the vertices of convex quadrilateral $P_0$ are contained in exactly one of $P_i$, there are four possible cases as demonstrated in~\cref{fig:proof-quadrilateral}: 
\begin{enumerate}
    \item All vertices of convex quadrilateral $P_0$ are contained within the same convex polytope, here referred to as the green polytope. Since convex polytopes maintain the property that any point on the line segment between any two points within the polytope also lies within the polytope, it follows that the entire area of convex quadrilateral $P_0$is contained within the green polytope.
    \item One vertex $C$ is located in a different polytope from vertices $A,B,D$. Given that the edge $BC$ satisfies Proposition 1%~\cref{thm:line}
    , there exists a point $M$ on $BC$ such that $M \in P_1 \cap P_2$. Similarly, for edge $CD$, there exists a point $N$ on $CD$ that also belongs to $P_1 \cap P_2$. 
    The convex quadrilateral $P_0$, i.e. $\triangle ABC$ can be divided into two polytopes: polytope $ABMND$ and $\triangle MNC$. Since each small polytope's vertices are entirely contained within at least one $P_i$ and each $P_i$ is convex, each of these smaller polytopes is contained within a polytope. Hence, the entire convex quadrilateral $P_0$ is contained within the union of $P_1$ and $P_2$.
    \item Two adjacent vertices $C,D$ are contained in a different polytope as $A,B$. Similarly, there exists a point $M$ on $BC$ and a point $N$ on $AD$ such that $M,N \in P_1 \cap P_2$. The convex quadrilateral $P_0$, i.e. $\triangle ABC$ can be divided into two polytopes: quadrilateral $ABMN$ and quadrilateral $MNDC$. Since each small polytope's vertices are entirely contained within at least one $P_i$ and each $P_i$ is convex, each of these smaller polytopes is contained within a polytope. Hence, the entire convex quadrilateral $P_0$ is contained within the union of $P_1$ and $P_2$.
    \item Vertices along the same diagonal ($AC$ and $BD$) are contained in a same polytope. We divide the quadrilateral $ABCD$ into two triangles $\triangle ABD$ and $\triangle BCD$. The two vertices of edge $BD$ are contained in the same convex polytope, so both two small triangles satisfy~\cref{thm:triangle}. Hence, the entire convex quadrilateral $P_0$ is contained within the union of $P_1$ and $P_2$.
\end{enumerate}
\begin{figure*}[h]
    \centering
    \begin{subfigure}[b]{0.19\textwidth}
        \centering
        \includegraphics[height=9em]{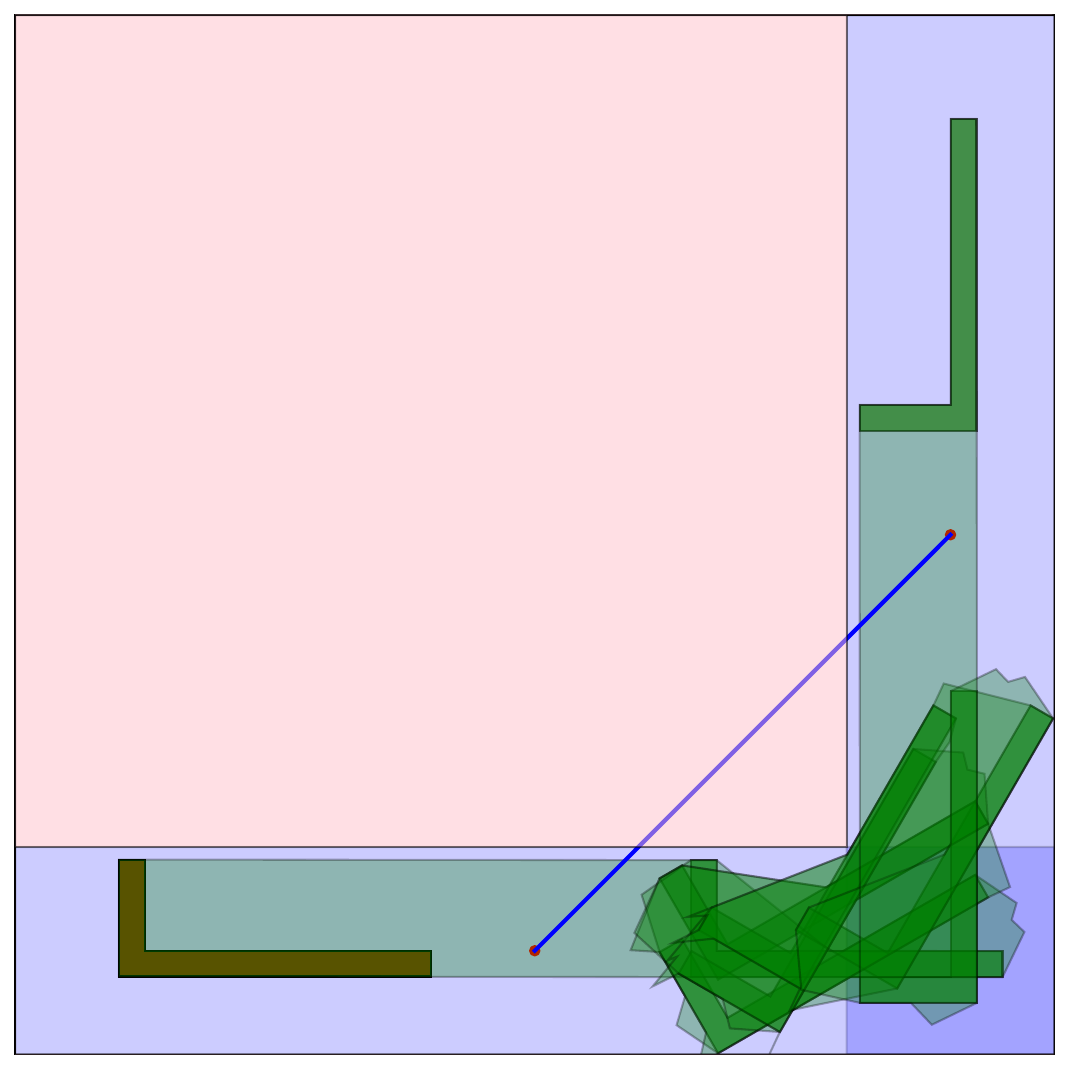}
        \caption{2d-Corner.}
    \end{subfigure}
    \hfill
    \begin{subfigure}[b]{0.25\textwidth}
        \centering
        \includegraphics[height=9em]{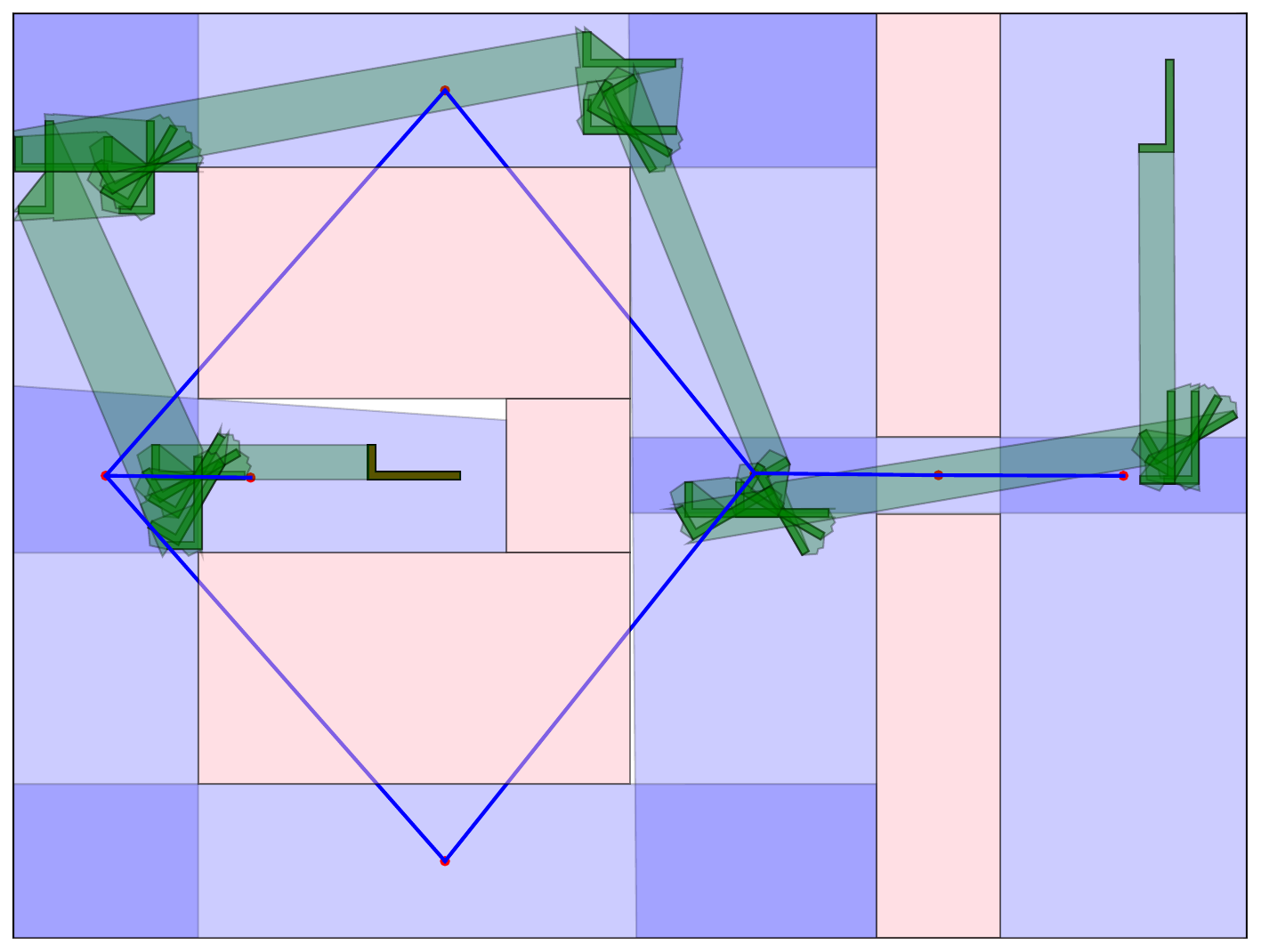}
        \caption{2d-bugtrap.}
    \end{subfigure}
    \hfill
    \begin{subfigure}[b]{0.25\textwidth}
        \centering
        \includegraphics[height=9em]{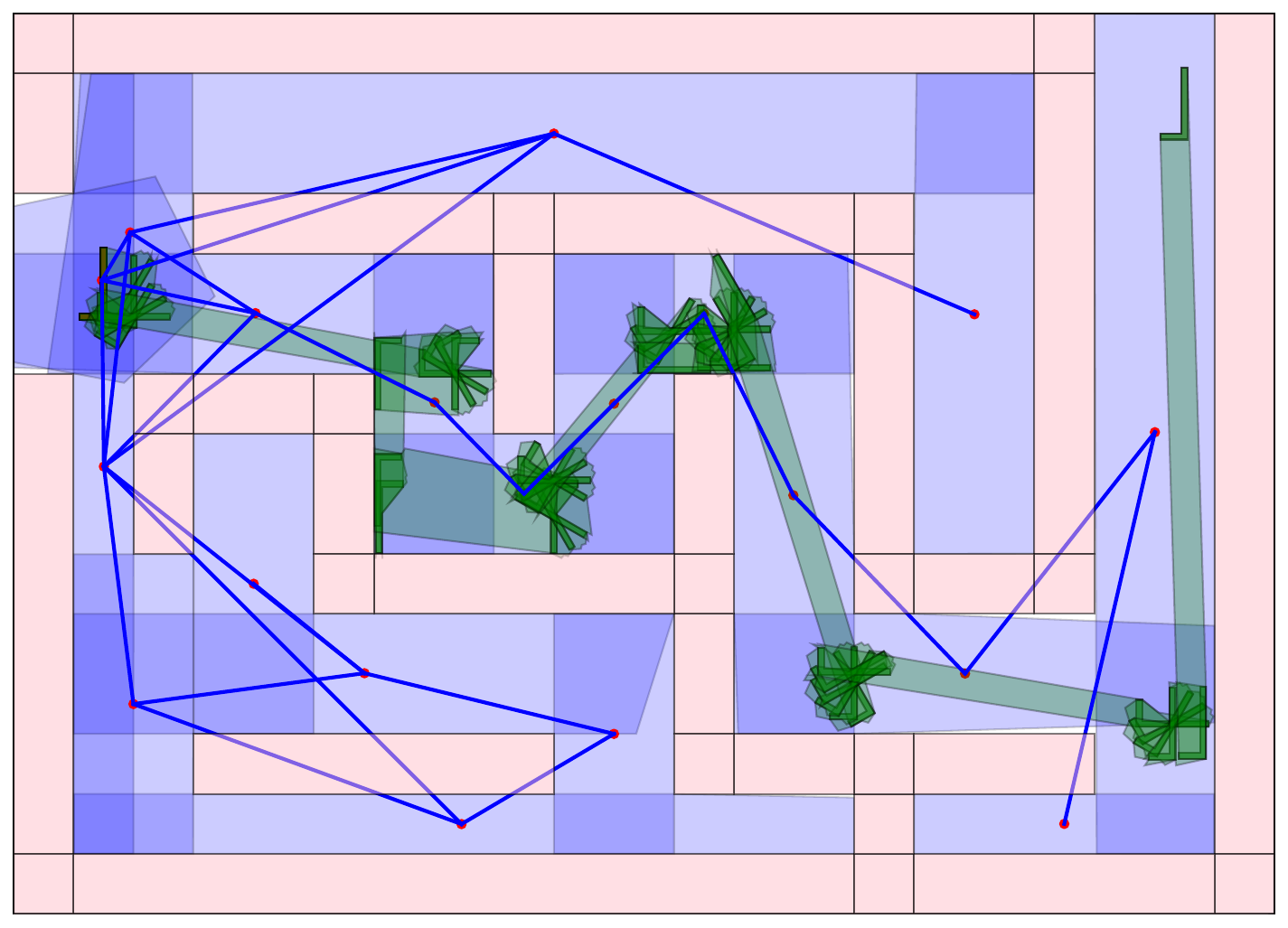}
        \caption{2d-maze.}
    \end{subfigure}
    \hfill
    \begin{subfigure}[b]{0.27\textwidth}
        \centering
        \includegraphics[height=9em]{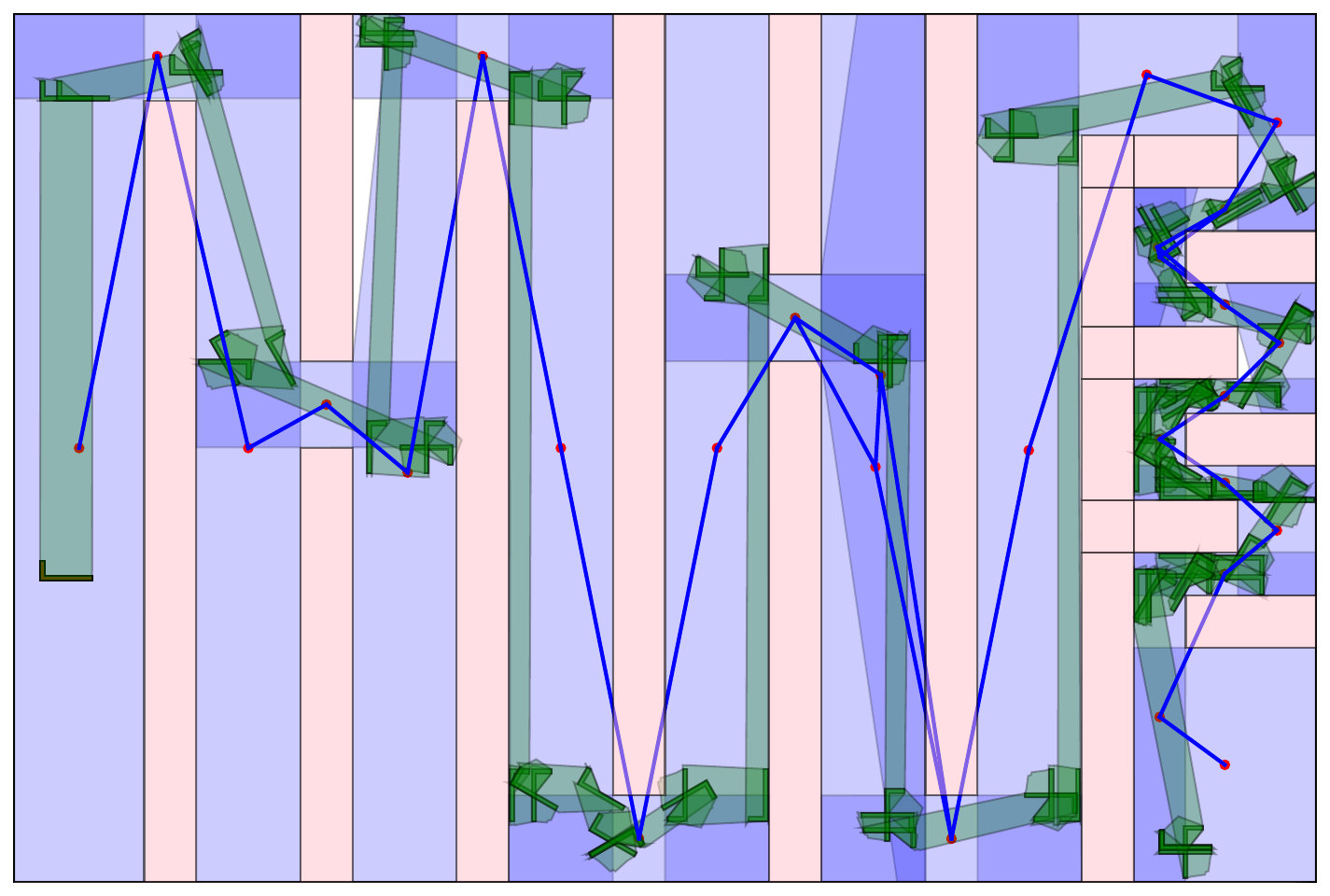}
        \caption{SCOTS.~\cite{2016scots}}
    \end{subfigure} \\ \vspace{.5em}
    \begin{subfigure}[b]{0.19\textwidth}
        \centering
        \includegraphics[height=9em]{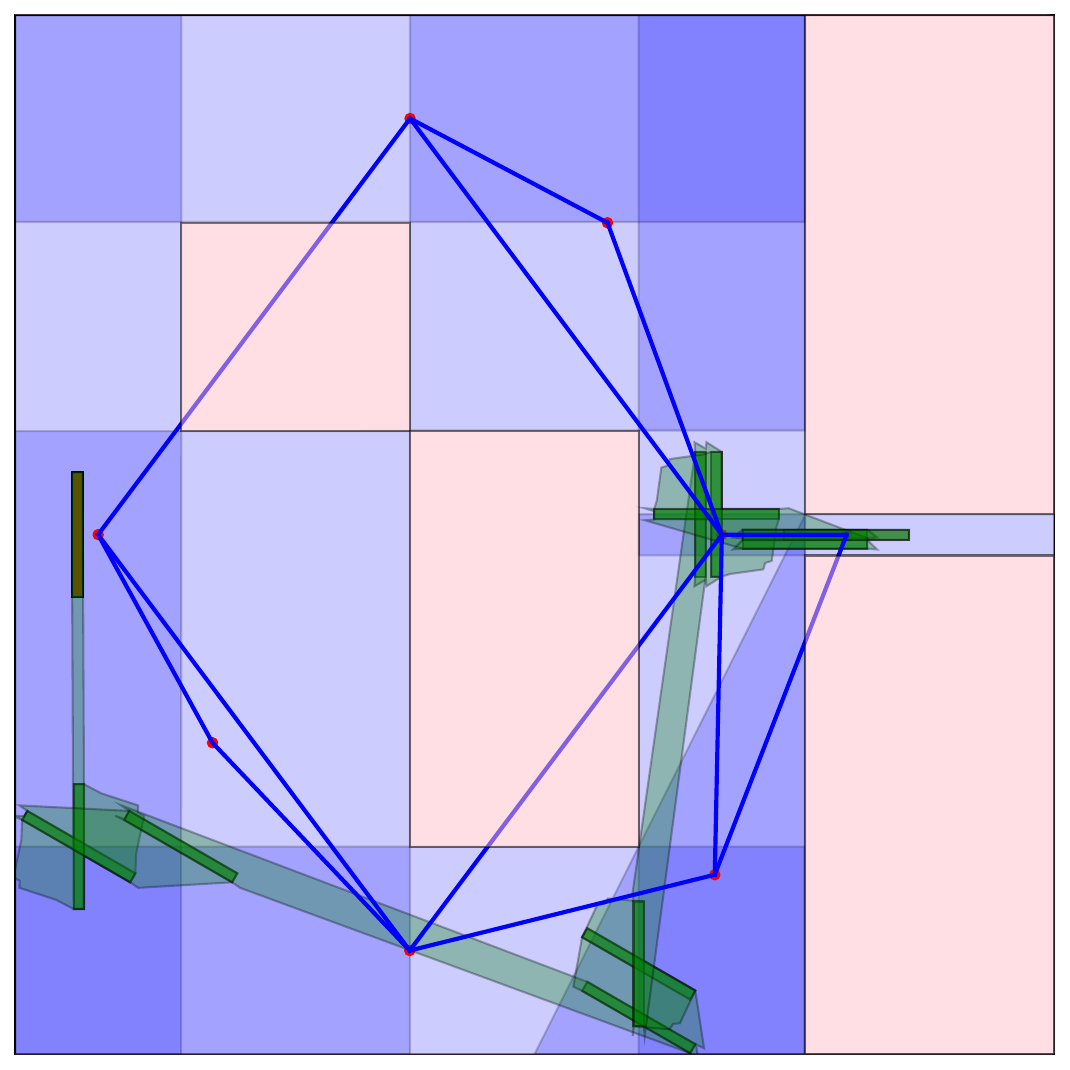}
        \caption{2d-peg in hole.}
        \label{fig:supp-2d-peg-in-hole}
    \end{subfigure}
    \hfill
    \begin{subfigure}[b]{0.25\textwidth}
        \centering
        \includegraphics[height=9em]{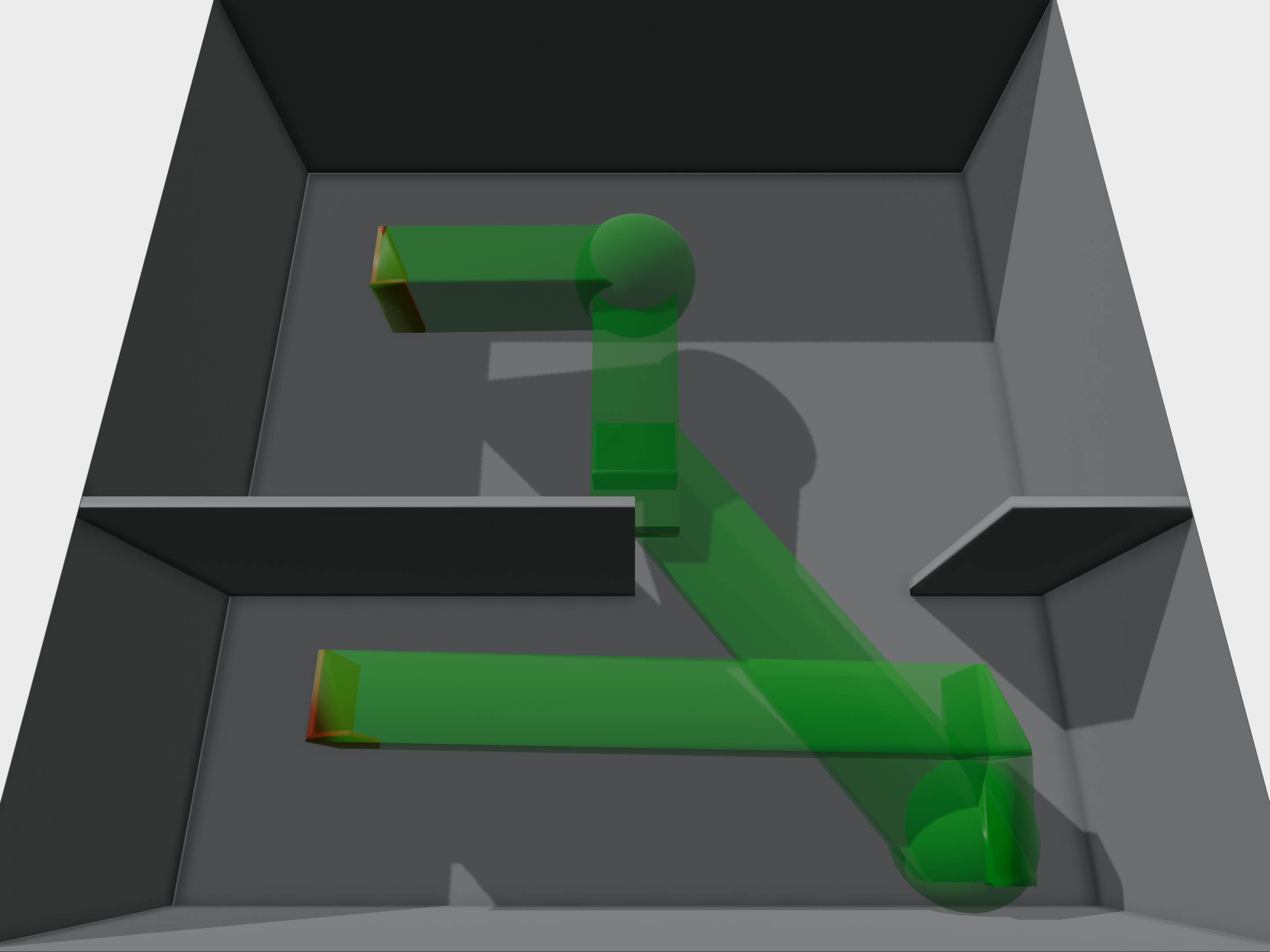}
        \caption{3d piano w.}
    \end{subfigure}
    \hfill
    \begin{subfigure}[b]{0.25\textwidth}
        \centering
        \includegraphics[height=9em]{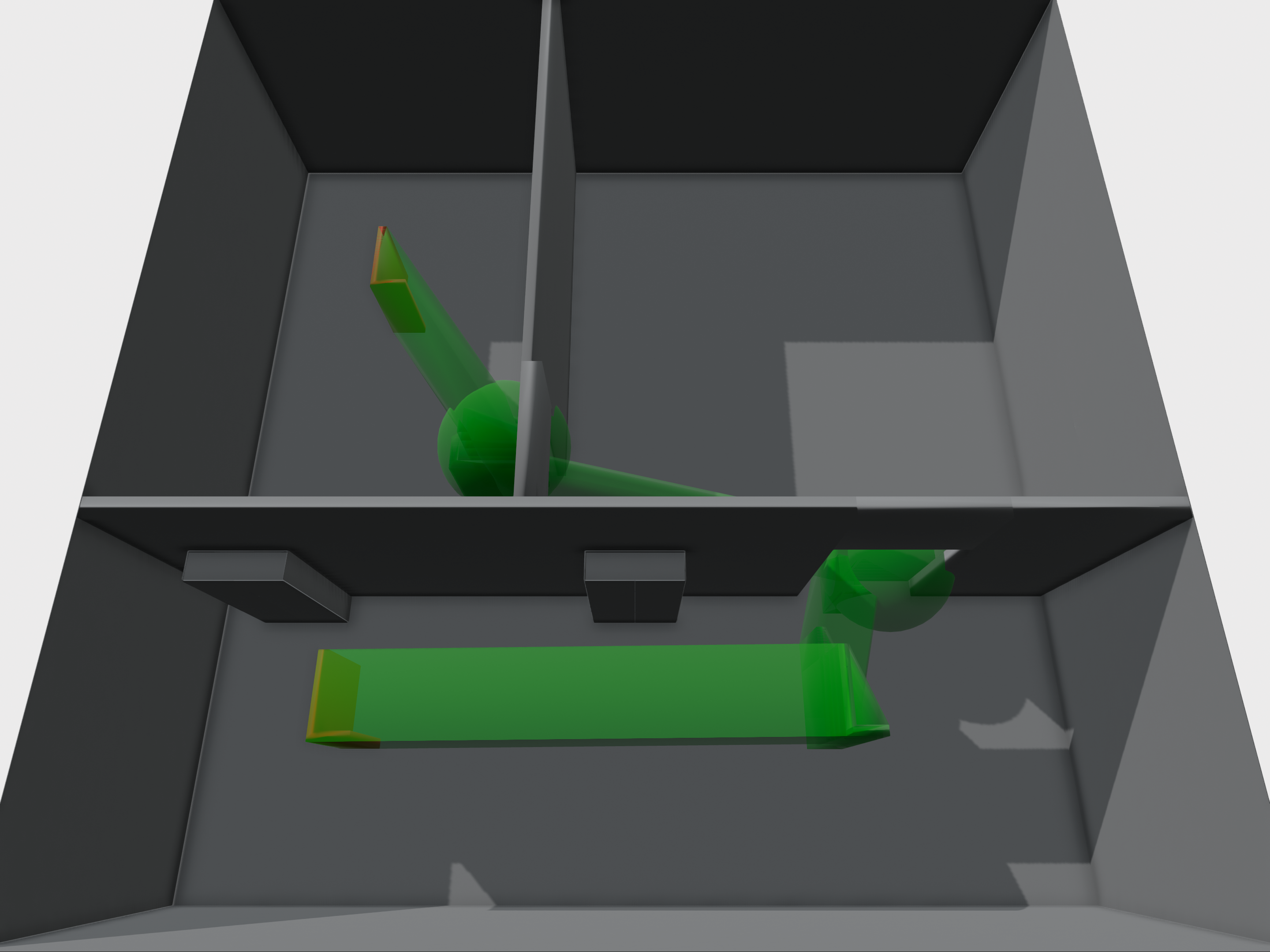}
        \caption{3d-narrow.}
    \end{subfigure}
    \hfill
    \begin{subfigure}[b]{0.27\textwidth}
        \centering
        \includegraphics[height=9em]{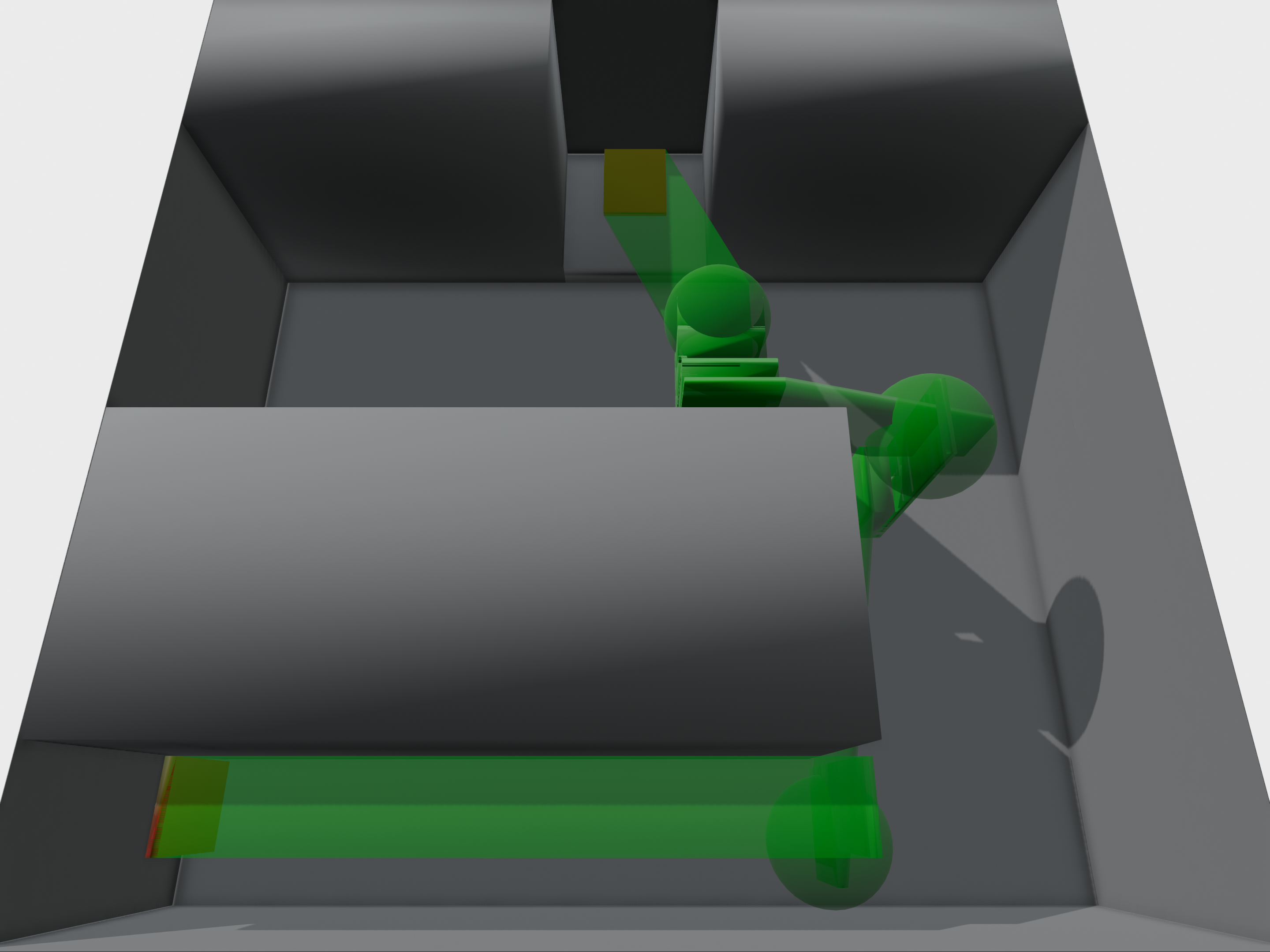}
        \caption{3d-peg-in-hole.}
        \label{fig:supp-3d-peg-in-hole}
    \end{subfigure}
    \caption{Visualization of planning results in 2D (obstacles: pink) and 3D scenarios. The start and end configurations are ploted with red, while waypoints and their corresponding reachable sets are shown in green. In 2D scenarios, we also illustrate the graph $\CoarseGraph$ with vertices ($\CoarseVert$) as blue polytopes and edges ($\CoarseEdge$) as blue lines connecting red dots, which represent the centers of vertex polytopes.~\cref{fig:supp-2d-peg-in-hole,,fig:supp-3d-peg-in-hole} show the planning results of convex objects, while the rest show the results of L-shape non-convex objects.}
    \label{fig:supp-exp-env}
\end{figure*}

When extending the conditions to allow vertices of convex quadrilateral $P_0$ to be contained in more than one of $P_i$, it is equivalent to considering an enlargement of the polytopes $P_1$ and $P_2$. Therefore, the proposition that convex quadrilateral $P_0$ is contained within the union of the polytopes still holds.
\end{proof}

\subsection{Additional Experiment Results}
We first provide additional experiment results in~\cref{fig:supp-exp-env} of unplotted planning shapes in the paper. 

We further include more examples in 3D to discuss our method, shown in~\cref{fig:increase-env-dup}.
For our method in 2D, our solution set will expand if we a) add more convex covers in the workspace, b) increase the
waypoint number inside MILP $N$ or c) increase the fineness of discretization on the surface $\partial P_{ij}$. Moreover, if all the vertice-to-center line segments lie inside the planned object, our method will plan for the exact geometry of the object and can eliminate the missing set of solutions eventually.
In 3D, our current method solves increasingly challenging problems by adding more convex polytopes into the free workspace, as illustrated in~\cref{fig:increase-env-dup}. This results in more nodes being included in the planning graph $G_d$, allowing for more maneuvers in the planned path and enabling the solution of both narrow-passage and more general problems.
We acknowledge that our current 3D method excludes a subset of solutions where most convex covers are so narrow that the object cannot fully rotate within the free space, and only a small set of orientations are admissible.

\begin{figure*}[h]
    \vspace{6pt}
    \centering
    \begin{subfigure}[b]{0.3\textwidth}
        \centering
        \includegraphics[height=9em]{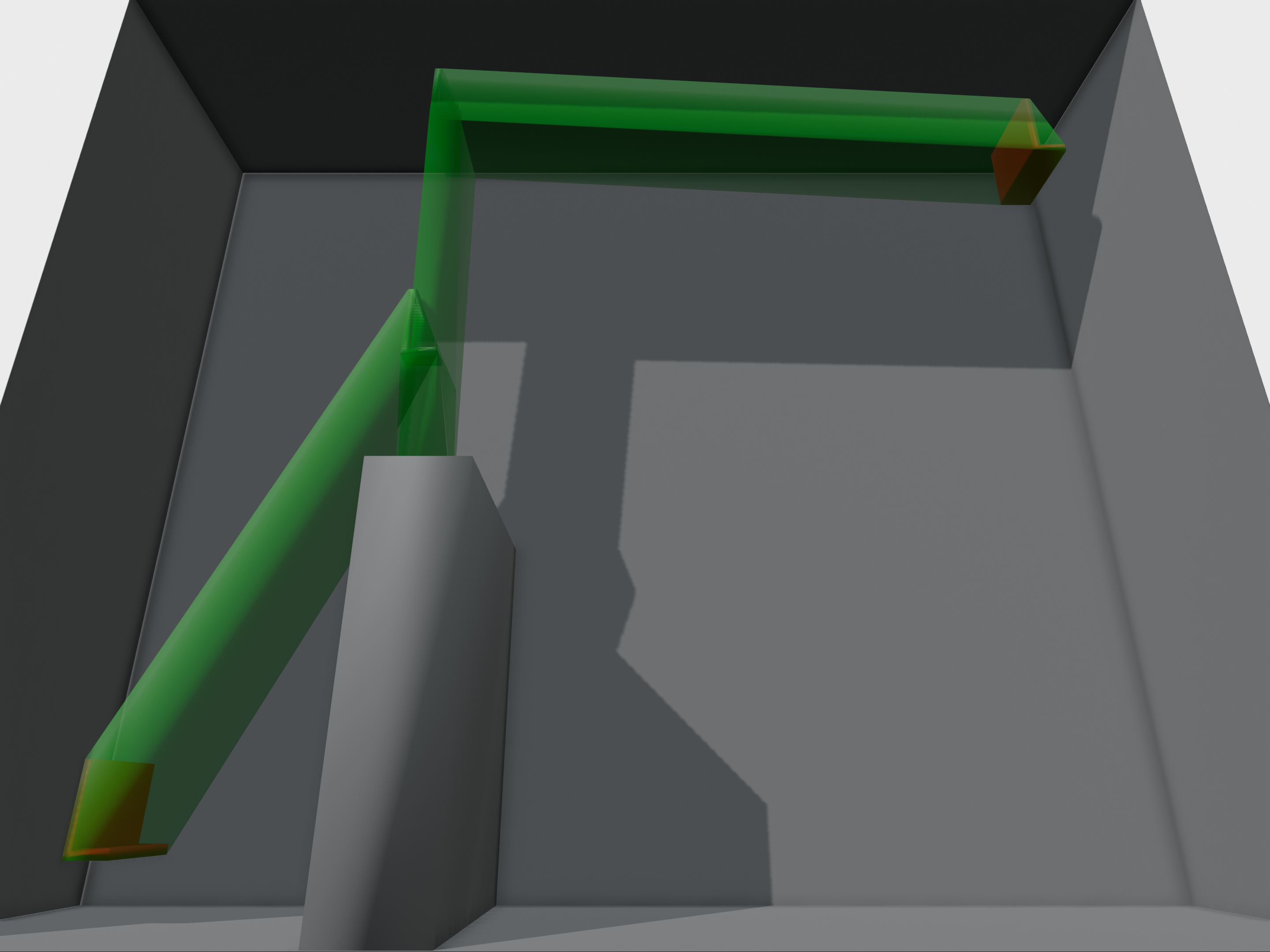}
        \caption{1 obstacle - solution}
    \end{subfigure}
    \hfill
    \begin{subfigure}[b]{0.3\textwidth}
        \centering
        \includegraphics[height=9em]{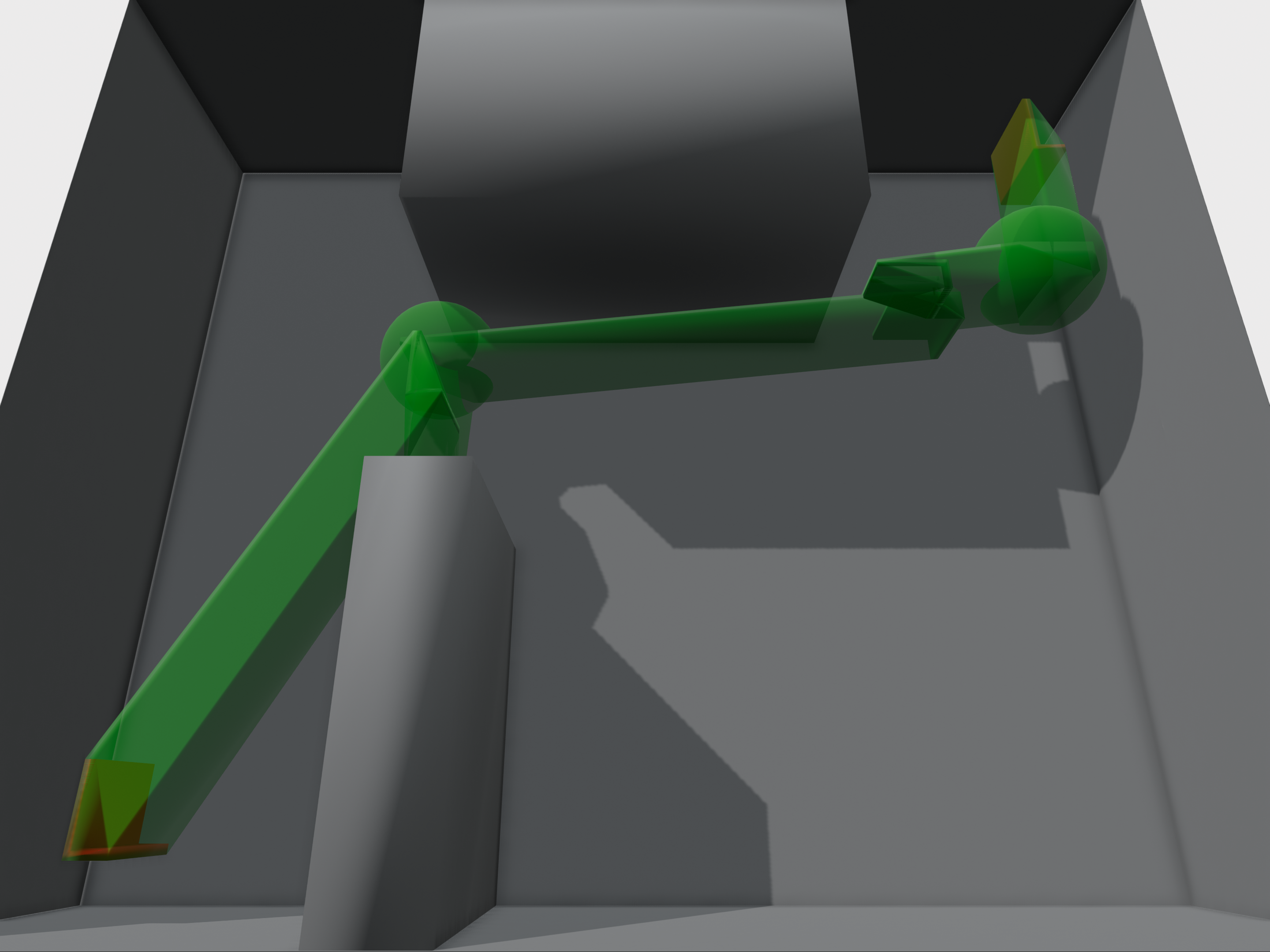}
        \caption{2 obstacles - solution}
    \end{subfigure}
    \hfill
    \begin{subfigure}[b]{0.3\textwidth}
        \centering
        \includegraphics[height=9em]{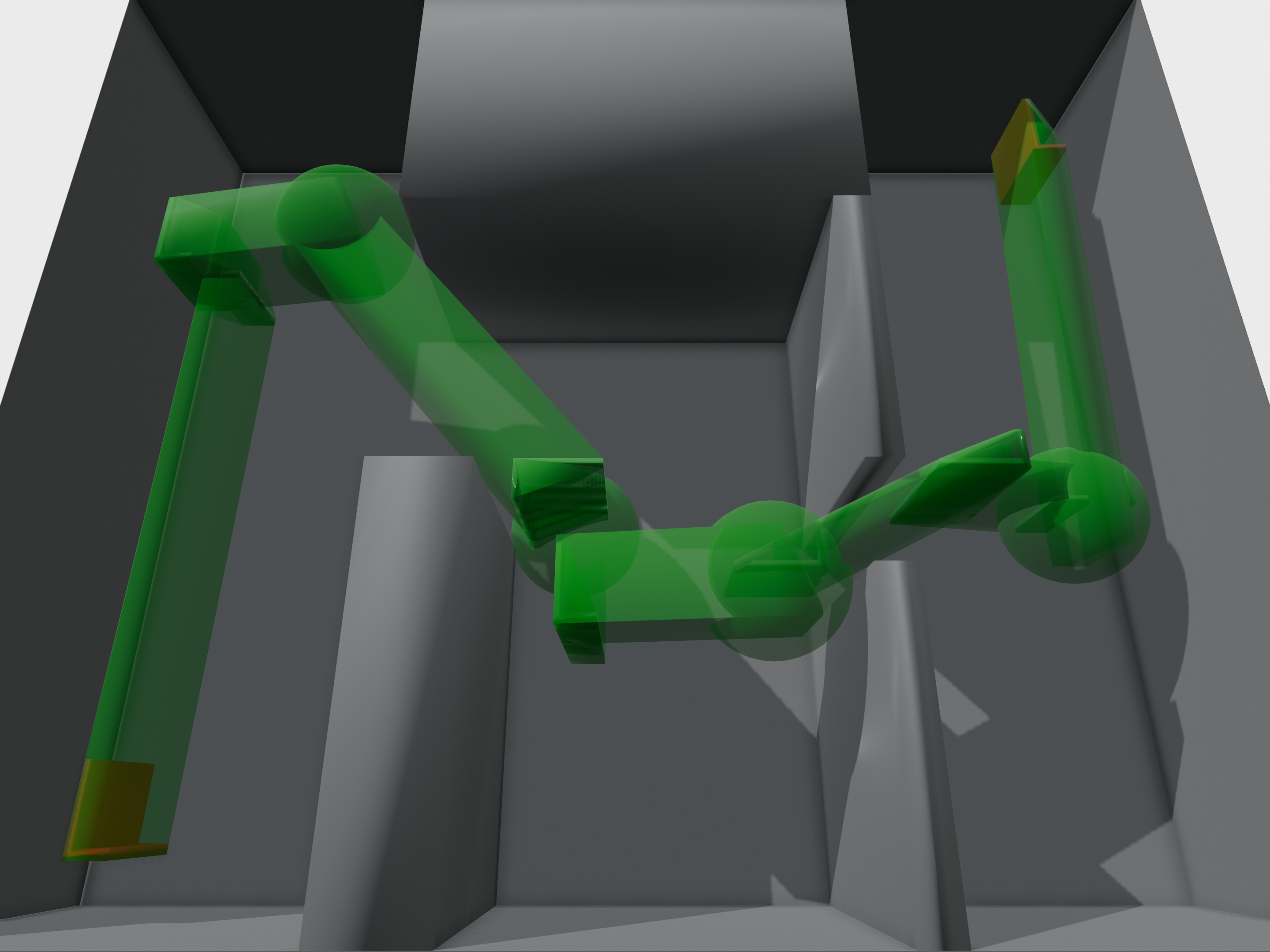}
        \caption{4 obstacles - solution}
    \end{subfigure} \\ \vspace{.5em}
    \begin{subfigure}[b]{0.3\textwidth}
        \centering
        \includegraphics[height=9em]{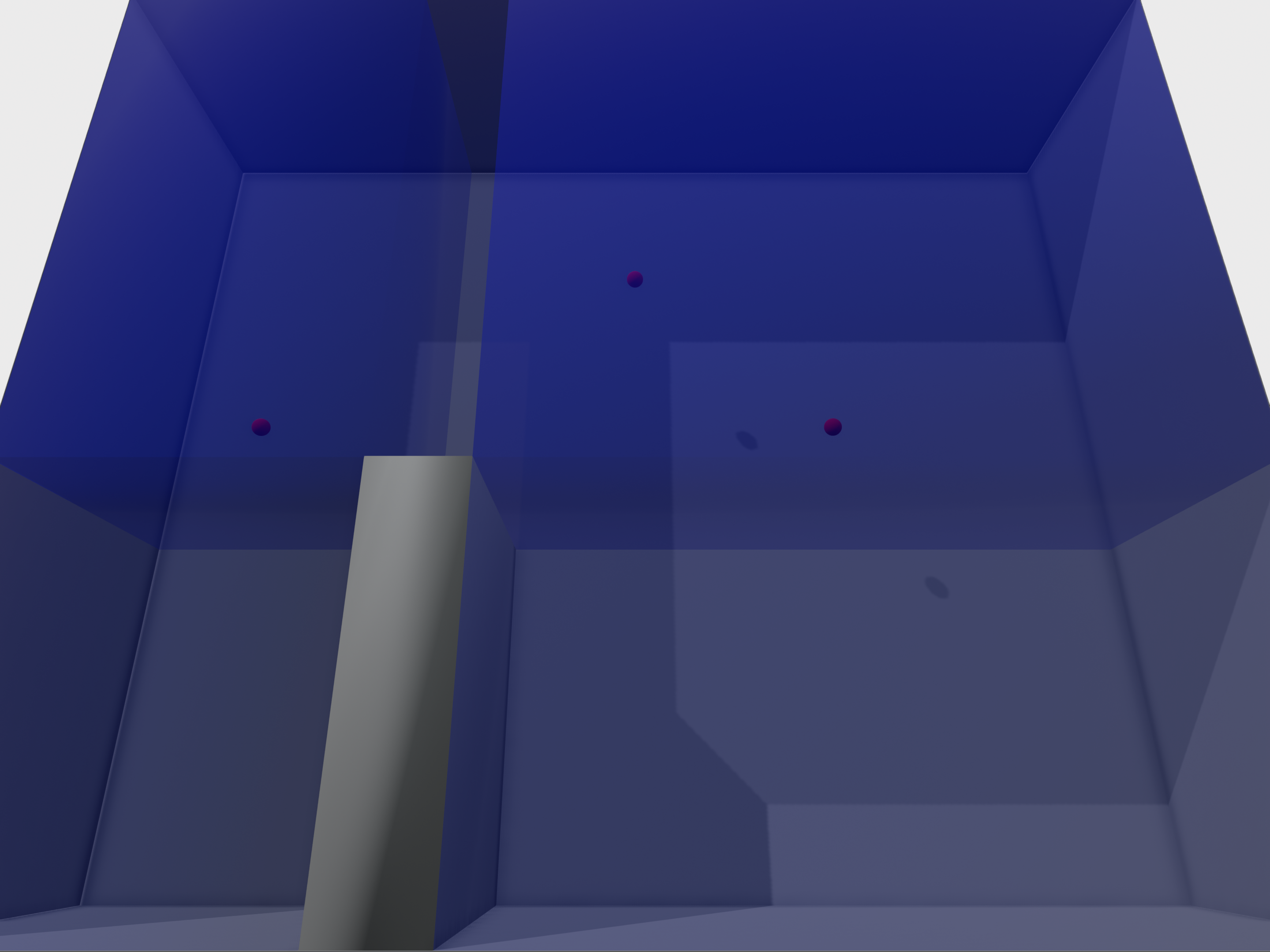}
        \caption{1 obstacle - decomposition}
    \end{subfigure}
    \hfill
    \begin{subfigure}[b]{0.3\textwidth}
        \centering
        \includegraphics[height=9em]{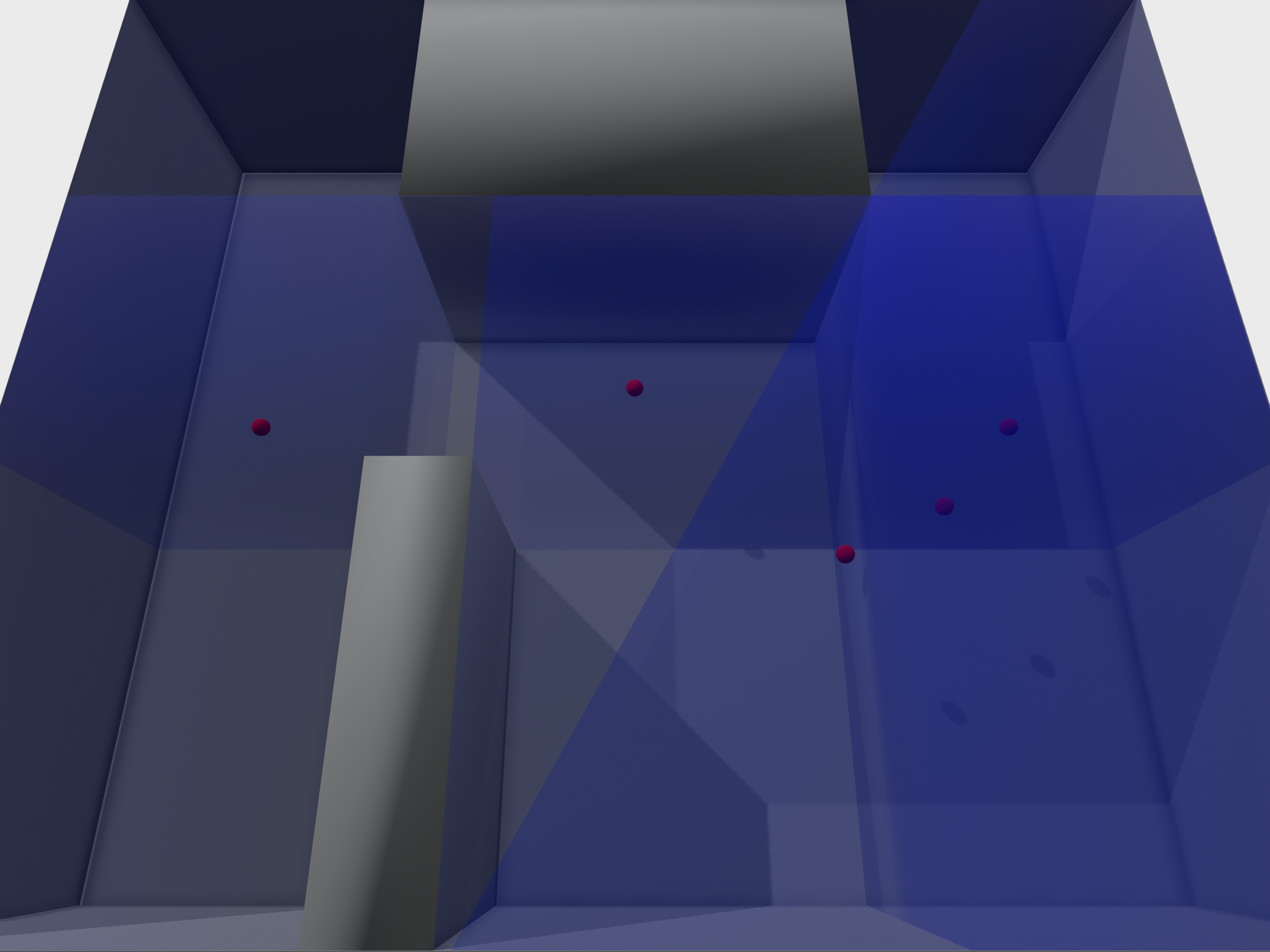}
        \caption{2 obstacles - decomposition}
    \end{subfigure}
    \hfill
    \begin{subfigure}[b]{0.3\textwidth}
        \centering
        \includegraphics[height=9em]{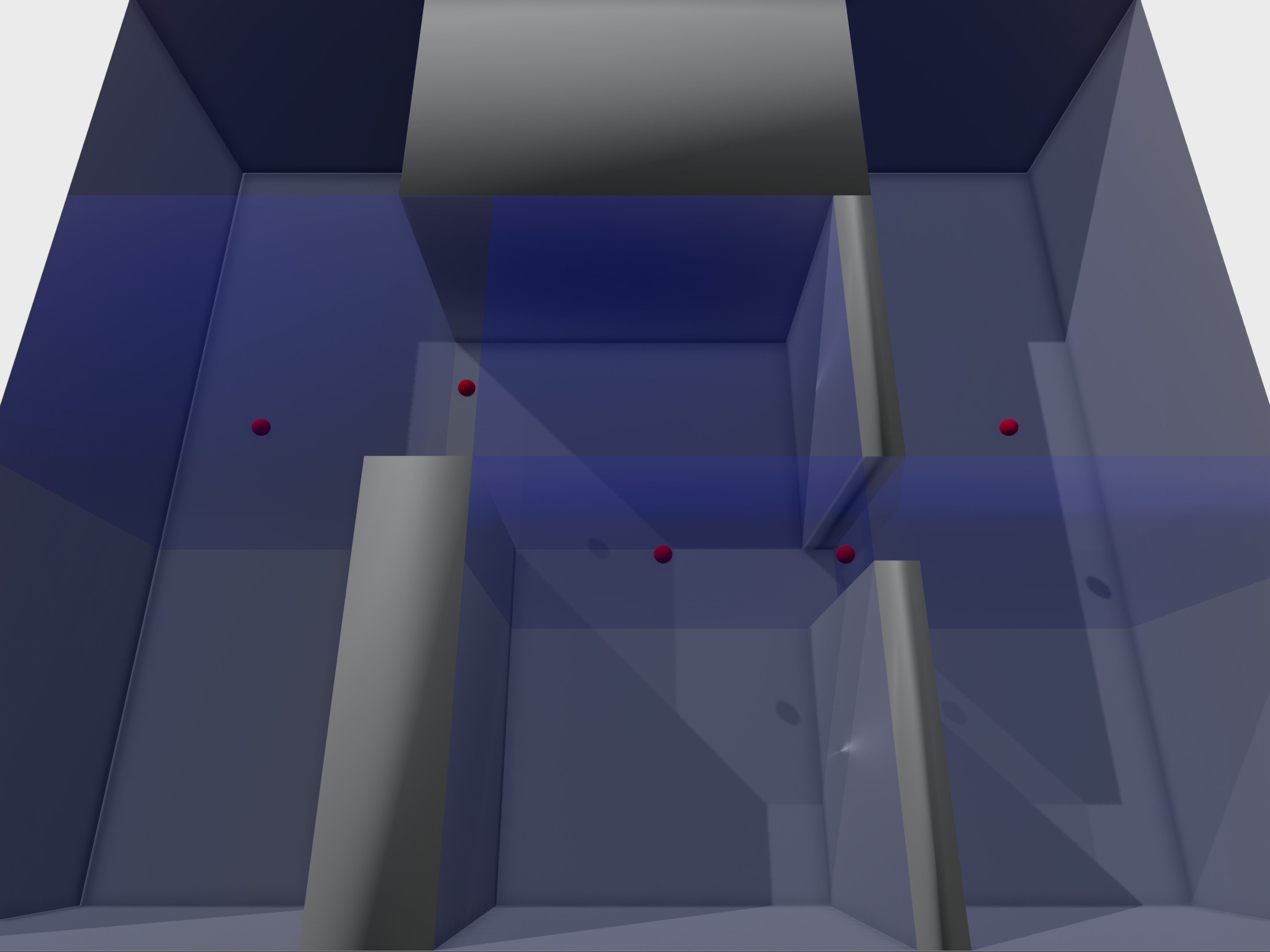}
        \caption{4 obstacles - decomposition}
    \end{subfigure}
    \caption{Three planning problems in 3D with an increasing number of obstacles. New obstacles are added to block the solution path from the previous problem. Our method solves more challenging problems incrementally by adding more polytope covers within the free space, allowing for more maneuvers in the solution paths. While the object is restricted to translational movements in 3D, this simplification enables straightforward collision-avoidance encoding while still permitting the solution of general problems. (a-c) show the solutions to the problem, and (d-f) show the convex polytope covers (blue) inside the free workspace, with a red dot indicating the center of each polytope.}
    \label{fig:increase-env-dup}
\end{figure*}

\subsection{Discussion on Complexity of Discretization}
The fineness of discretization significantly impacts the size of the solution set. If it's too fine, then the MILP problem will be too large. If the discretization is too coarse, we may exclude a large portion of solution set. So we'd like to include a discussion on the complexity of the discretization for our method.

In our 2D experiments, we used the following discretization parameters:
    \begin{itemize}
        \item The rotation $\theta$ is discretized as $N_\theta=12$ values, allowing rotation as small as $30^\circ$.
        \item The translation discretization on $\partial P_{ij}$ is $N_t=60$ points with equal intervals along the line ring.
        \item The discretization in collision avoidance is that each line segment to be checked is divide into $N_c=10$ equal parts.
    \end{itemize}
    The complexity of the discretization is: 
    \begin{itemize}
        \item The discretization of $\theta$ and translation will affect the construction of patches on $\partial P_{ij}$, thus affecting the total number of MILP to be solved. Not considering the grouping operation, the number of MILP will grow at the rate of $O(N_\theta \cdot N_t)$.
        \item Inside each MILP, the discretization parameter $N_c$ in collision avoidance only affects the number of constraints at the rate of $O(N_c)$ but has no effect on the number of variables. The discretization of $\theta$ and translation will affect the number of variables at the rate of $O(N_\theta+N_t)$ without without affecting the number of constraints.
    \end{itemize}

\end{document}